\pdfoutput=1

\documentclass[11pt, letterpaper]{article}
\usepackage[margin=1in]{geometry}
\usepackage[english]{babel}
\usepackage[T1]{fontenc}
\usepackage{parskip}

\usepackage{amsmath}
\usepackage{amssymb}
\usepackage{amsthm}
\usepackage{bbm}
\usepackage{bm}
\usepackage{booktabs}
\usepackage{makecell}
\usepackage[table]{xcolor}
\usepackage{graphicx}
\usepackage{subcaption}
\usepackage{tikz}
\usetikzlibrary{arrows.meta}
\usepackage{algorithm}
\usepackage{algpseudocode}
\usepackage{thm-restate}
\usepackage[numbers]{natbib}
\usepackage{xspace}

\usepackage[
    colorlinks=true,
    linkcolor=blue!70!black,
    citecolor=blue!70!black,
    urlcolor=blue!70!black
]{hyperref}
\usepackage[nameinlink,noabbrev]{cleveref}

\graphicspath{{./figures/}}
\newtheorem{lemma}{Lemma}

\newtheorem{definition}{Definition}

\newtheorem{assumption}{Assumption}

\DeclareMathOperator*{\argmin}{arg\,min}

\newcommand{\defeq}{:=}
\newcommand{\q}{p}
\newcommand{\apxnu}{\widetilde \nu}
\newcommand{\MDM}{\textup{MDM}}
\newcommand{\voc}{\mathcal V}
\newcommand{\bas}[1]{\begin{align*}#1\end{align*}}
\newcommand{\apx}{\widetilde{X}}
\newcommand{\bb}[1]{\left(#1\right)}
\newcommand{\1}{\mathbbm{1}}
\newcommand{\E}{\mathbb{E}}
\newcommand{\trum}{\textrm{TR}\xspace}
\newcommand{\teum}{\textrm{TE}\xspace}
\newcommand{\tp}{\nu}
\newcommand{\tq}{p}
\newcommand{\cL}{\mathcal{L}}
\newcommand{\mask}{\ensuremath{\mathtt{[MASK]}}}
\newcommand{\coh}{\mathrm{coh}}

\title{Understanding Parallel Samplers in Masked Diffusion via Random Walks on Graphs}
\date{}
\author{
Vansh Bansal\thanks{UT Austin SDS, \texttt{vansh@utexas.edu}. Equal contribution.} \and
Cho Cholyeon\thanks{UT Austin CS, \texttt{cc77837@my.utexas.edu }. Equal contribution.}\and
Syamantak Kumar\thanks{UT Austin CS, \texttt{syamantak@utexas.edu}. Equal contribution.} \and
Sujay Sanghavi\thanks{UT Austin ECE, \texttt{sanghavi@mail.utexas.edu}}\and
Purnamrita Sarkar\thanks{UT Austin SDS, \texttt{purna.sarkar@austin.utexas.edu}}
}

\begin{document}

\maketitle

\begin{abstract}
In this paper, we propose using random walks on graphs as a verifiable sandbox to study different parallel sampling strategies in masked diffusion models ($\MDM$s). We train an $\MDM$ on random walk samples from a fixed graph. The graph or the transition kernel is never shown to the model explicitly and plays the role of latent structure in the sequences, albeit one that is controllable and can be used for quantitative evaluation. Thus, this framework enjoys a Sudoku-like validity check: verifying that an output is a valid walk and estimating the Markov kernel from the walks to measure distribution fidelity.

Using simple graphs, we theoretically prove that parallel unmasking via widely used scores like lowest entropy is \textit{not} uniformly better than a random parallel sampler; the performance critically depends on the structure of the underlying graph. We develop a new \textit{bisection sampler} for random walks, which takes logarithmic steps in the sequence length and is provably exact under perfect training. Experiments on various graph walk tasks show that different parallel samplers are better for different graphs even in practice. Our initial experiments on a pretrained OpenWebText $\MDM$ show that the bisection-style samplers improve speed--quality tradeoffs even for language generation. Together, these results position graph random walks as a mechanistic benchmark for diagnosing and designing parallel samplers for masked diffusion models.
\end{abstract}

\vspace{-5pt}
\section{Introduction}
\label{sec:introduction}

Masked diffusion models ($\MDM$s) generate discrete data by iteratively denoising masked tokens
\citep{devlin2019bert,austin2021structured,campbell2022continuous,sahoo2024simple,shi2024simplified,lou2024sedd,ou2025absorbing}.
Because the denoiser can be queried on arbitrary partial contexts, inference is inherently any-order: tokens may be revealed left-to-right, adaptively, or in parallel. Recent work has shown that this freedom is not merely a design choice.  Token ordering strongly affects generation quality: adaptive unmasking policies improve performance on structured tasks such as Sudoku, and multi-token samplers can substantially reduce the number of function evaluations (NFEs), since one denoiser call may reveal a block of tokens rather than a single coordinate
\citep{chang2022maskgit,shih2022anyorder,kim2025train,benhamu2025accelerated,anari2025autospeculation,wu2025fastdllm,wei2025slowfast}.

This speedup comes with a statistical risk. With exact conditional marginals, any one-coordinate reveal order samples exactly by the chain rule. A parallel update instead replaces the true block conditional by a product of one-coordinate marginals, which is valid only when the selected coordinates are conditionally independent given the current context. Thus parallel decoding depends not only on block size or uncertainty scores, but on the conditional-dependence structure induced by the reveal order. This is hard to isolate in language, where model error, sampler error, and evaluation noise are intertwined.  This leads to the central question of the paper:

\emph{When is parallel unmasking statistically accurate, and how can a sampler choose many tokens at once while respecting the conditional dependence structure of the target distribution?}

We answer this question using graph random walks as a controlled setting where the latent structure is hidden from the model but available for exact evaluation like Sudoku \citep{kim2025train}. Fix a graph $G=(V,E)$ and length $L$; training examples are walks $X_{1:L}\in V^L$, with vertices treated as categorical tokens and vocabulary $V\cup\{\mask\}$. The model never sees $G$ or its transition kernel, only samples from the walk distribution. Thus, the graph provides controlled latent structure: edge density, bottlenecks (two dense components connected by a few edges), endpoint constraints (two endpoints belonging to two components at the end of the bottleneck). Standard random walks are first-order Markov chains: the next state depends only on the current state and is conditionally independent of the path history. They therefore provide a natural framework for modeling sequential processes. Higher-order variants extend this framework by allowing transitions to depend on multiple previous states, making them useful for capturing memory effects in settings such as web navigation, network flows, and complex networks~\citep{chierichetti2012web,rosvall2014memory,benson2016higher,benson2017spacey}.

\begin{figure}[t]
    \centering    \includegraphics[width=0.8\linewidth]{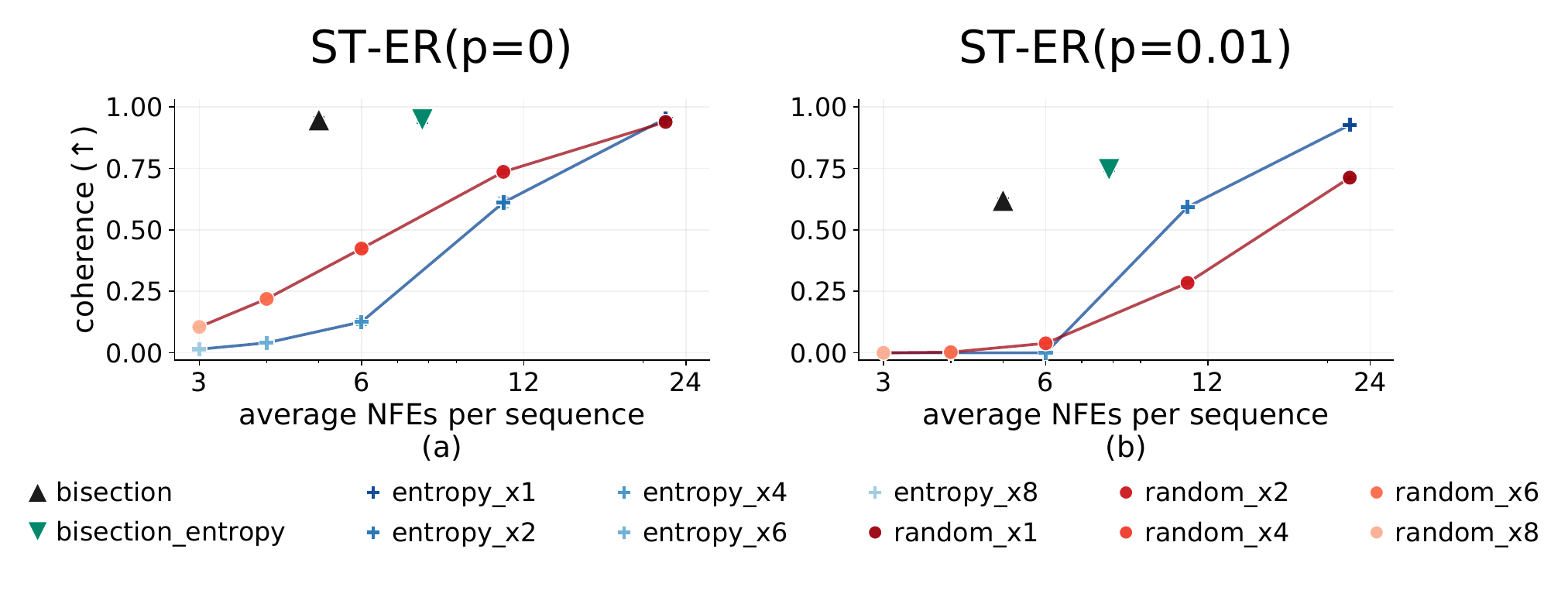}
    \caption{Trained only on unconditional random walks, the model generates \textit{conditional} walks after endpoints are revealed at test time. Coherence is the fraction of valid generated walks. Unmasking policies yield different speed–coherence tradeoffs, with rankings depending on graph structure: (a) ST–ER($p=0$), a spanning tree; (b) ST–ER($p=0.01$), with added random non-tree edges.}
    \label{fig:conditional_coherence}
\end{figure}

Graph walks also provide an exact validity check: a generated sequence is coherent iff every consecutive token pair is an edge in the graph. Thus, they play a Sudoku-like role; the model never sees the constraints, but validity is directly testable, while also giving a tunable family of sequence distributions. Figure~\ref{fig:conditional_coherence} shows that small changes in graph structure can flip sampler rankings of coherence. On trees, entropy-guided parallel updates may commit to correlated choices too early, while adding Erd\H{o}s--R\'enyi edges creates many competing routes where entropy can first fix useful anchors. This illustrates that parallel decoding is governed by conditional dependence, not uncertainty alone.

We therefore introduce \textit{bisection-style samplers}, motivated by Markov separation in random walks (Section~\ref{sec:bisection_sampler}). The sampler reveals a small middle block, which separates the two sides, and then recurses in parallel. Under perfect conditionals, it is exact when the block size matches the walk order, achieving logarithmic parallel depth. Experiments on graph walks and pretrained masked language models show that this coarse-to-fine schedule improves speed--quality tradeoffs beyond the graph benchmark.

\paragraph{Contributions.}
Our primary contributions are summarized below.

\textbf{1. A controlled benchmark for parallel masked diffusion.}
    We formulate graph random walks as a masked diffusion task, in which the model only has sample access to random walks in the graph, but the graph is \textit{available for evaluation}. The benchmark supports unconditional generation, endpoint-conditioned walks, and higher-order random walks, with coherence and transition total variation (TV) distance  as direct evaluation metrics.

    \textbf{2. Separations between parallel unmasking policies.}
    We show that exact sequential unmasking is order-invariant, while parallel unmasking is not. Simple DAG constructions show that lowest-entropy two-at-a-time unmasking can outperform random unmasking on some graphs and underperform it on others, demonstrating that no local uncertainty score is uniformly optimal.

    \textbf{3. A bisection sampler for random walks.}
    We introduce bisection-style samplers that reveal separator positions or separator blocks and then recurse. For order-$k$ Markov walks, the sampler is exact under perfect conditionals for both unconditional walks and endpoint-conditioned walks, with $O(k\log(L/k))$ NFEs.

    \textbf{4. Transfer beyond graph walks.}
    On a pretrained OpenWebText $\MDM$, bisection-style schedules improve non-autoregressive speed--quality tradeoffs across MAUVE, generative perplexity, entropy, and repetition, suggesting that graph walks can inform sampler design beyond the sandbox.

The rest of the paper is organized as follows.
Section~\ref{sec:prelim-mdlm} introduces the graph-walk sequence distributions, the masked diffusion setup, and the unmasking policies studied in the paper.
Section~\ref{sec:theoreitcal_parallel_sampling} gives graph constructions showing that parallel entropy and random unmasking can each outperform the other, depending on the underlying conditional-dependence structure.
Section~\ref{sec:bisection_sampler} introduces bisection-style samplers and proves their exactness under perfect conditionals, while Sections~\ref{sec:experiments} and \ref{sec:language_experiments} evaluate the samplers on graph walks and language generation.

\section{Preliminaries}
\label{sec:prelim-mdlm}

\subsection{Graph-walk sequence distributions}
\label{subsec:graph-walk-distributions}

Let $G=(V,E)$ be a finite graph, and write $u\sim v$ if $(u,v)\in E$. We consider random walks with self-loops, so a transition from $u$ may move to any $v\in N(u)\cup\{u\}$. Each walk is represented as a sequence of node IDs, so the sequence vocabulary is $V$. Fix a length $L$ and an order $k\ge 1$. An order-$k$ graph walk is a process $X_{1:L}\in V^L$ such that, for $\ell\ge k$, $\Pr(X_{\ell+1}=v\mid X_{1:\ell}=x_{1:\ell})
=
P_k(v\mid x_{\ell-k+1:\ell}).$
Given an initial law $\rho_k$ on length-$k$ histories, the induced path law is $\nu_k(x_{1:L})
=
\rho_k(x_{1:k})
\prod_{\ell=k}^{L-1}
P_k(x_{\ell+1}\mid x_{\ell-k+1:\ell}).$

For $k=1$, this recovers the lazy random walk; on an unweighted graph, $P_1(v\mid u)=1/(\deg(u)+1)$ if $v=u$ or $v\sim u$, and $0$ otherwise. We also consider endpoint-conditioned bridges. For $s,t\in V$ with $\nu_k(X_1=s,X_L=t)>0$, define $\nu_k^{s,t}
\defeq
\nu_k(\cdot\mid X_1=s,X_L=t).$
When the order and conditioning mode are clear, we write $\nu$ for the target sequence distribution, either $\nu_k$ or $\nu_k^{s,t}$.

\subsection{Masked diffusion model for graph-walk sequences}
\label{subsec:mdlm-graph-walks}

We use a masked diffusion language model (MDLM)\citep{sahoo2024simple} as a generator for samples from $\nu$. The model vocabulary is $V\cup\{\mask\}$, where $\mask$ is a distinguished mask token. For $U\subseteq [L]$, denote $X_U=(X_i)_{i\in U}$ and $x_U\oplus \mask_{[L]\setminus U}$ for the sequence that agrees with $x_U$ on $U$ and is masked elsewhere. We follow the standard MDLM setup: during training, coordinates of $X_{1:L}\sim \nu$ are randomly masked, and the denoising loss trains the model to predict the clean token at each masked coordinate. Vertex labels are treated as arbitrary categorical tokens: the model is never given $E$, the transition kernel, or any graph features, so crucially, the graph appears only through samples. Thus, given $x_U\oplus \mask_{[L]\setminus U}$, the model returns one-coordinate marginals
\[
p_i(\cdot\mid x_U)
\defeq
p_{\theta,i}(\cdot\mid x_U\oplus \mask_{[L]\setminus U}),
\qquad i\in [L]\setminus U,
\]
which we interpret as approximations to $\nu(X_i=\cdot\mid X_U=x_U)$.

At inference, generation starts from an initial revealed set $U_0$. For unconditional generation, $U_0=\emptyset$. For bridge generation, we reveal endpoints of a held-out path and set $U_0=\{1,L\}$ with $\widetilde X_1=x_1$ and $\widetilde X_L=x_L$. At round $r$, let $M_r=[L]\setminus U_r$. Given $\widetilde X_{U_r}=x_{U_r}$, an unmasking policy chooses a block $B_r\subseteq M_r$, samples independently $\widetilde X_i\sim p_i(\cdot\mid x_{U_r}),
\; i\in B_r,$
and updates $U_{r+1}=U_r\cup B_r$.

The baseline policies differ only in how they choose $B_r$. Random unmasking chooses either one uniformly random coordinate from $M_r$, or a uniformly random subset of size $\min\{b,|M_r|\}$ in the $b$-at-a-time version. Greedy entropy chooses the smallest $H_i=-\sum_{a\in V}p_i(a)\log p_i(a)$; greedy confidence chooses the largest $C_i=\max_{a\in V}p_i(a)$; and greedy margin chooses the largest $\Delta_i=p_i^{(1)}-p_i^{(2)}$, where $p_i^{(1)}\ge p_i^{(2)}$ are the two largest probabilities under $p_i$.

If $|B_r|=1$ at every round and the conditionals are exact, any reveal order samples exactly from $\nu$ by the probability chain rule. For $|B_r|>1$, the sampler replaces the true block conditional $\nu(X_{B_r}=x_{B_r}\mid X_{U_r}=x_{U_r})$ by the product $\prod_{i\in B_r}\nu(X_i=x_i\mid X_{U_r}=x_{U_r})$, which is exact only when the block is conditionally independent given the current context.

\section{Theoretically understanding parallel unmasking via graph structure}
\label{sec:theoreitcal_parallel_sampling}

In this section, we use simple graph constructions to show that no single
parallel unmasking heuristic is uniformly optimal. The key issue is not merely
which positions are most certain, but which positions can be revealed together
without introducing conditional-dependence errors. Entropy-based samplers help
when low-entropy positions act as separators, but can fail when they cluster
updates inside a dependent component or leave coupled choices to be sampled
independently. Figures~\ref{fig:line-dag-d3-m2} and~\ref{fig:kldag} show two simple graph families which expose these behaviors.

We will use $\tp$ to denote the true underlying distribution and $\tq$ to denote the distribution learned using the $\MDM$. We operate under the following assumption regarding the learned distribution.

\begin{assumption}[Perfect conditional marginals]
\label{ass:perfect-conditionals}
Let $\nu$ be the target distribution on paths
$X_{1:L}=(X_1,\ldots,X_L)$, and $U\subseteq[L]$ be the currently unmasked set of coordinates.
Then, for every partial assignment $x_{U}$ satisfying
$\nu(X_{U}=x_{U})>0,$
the model's conditional marginal for the masked coordinates is exact, i.e, $\q_{i}(\cdot \mid x_{U})
=
\nu\!\left(
X_{i}=\cdot
\mid
X_{U}=x_{U}
\right)$ for all $i\in [L]\backslash U.$
\end{assumption}
Lemma~\ref{lem:any-order-exactness} in Appendix~\ref{app:any_order_unmasking} formally shows that sequential unmasking is order-invariant. It is a direct consequence of the probability product rule and is standard in any-order autoregressive modeling~\cite{anari2025autospeculation,kim2025train,uria2016nade,uria2014deep}. We include it for completeness since we apply it to random-walk and random-walk-bridge distributions.
In this section, for simplicity of our analysis, we use DAGs with a particular context length so that each level corresponds to a position in the generated walk. We use two-at-a-time random ($\trum$) and entropy ($\teum$) samplers in the analysis, with ties broken uniformly at random.

\begin{figure}[t]
\centering
\captionsetup{font=small}

\begin{minipage}[t]{0.20\textwidth}
\centering
\resizebox{\linewidth}{!}{%
\begin{tikzpicture}[
    >=stealth,
    every node/.style={circle,draw,thick,minimum size=5mm,inner sep=0pt},
    every edge/.style={draw,->,thick}
]

\node (r) at (0,4) {};

\node (a1) at (-1.4,2.5) {};
\node (a2) at ( 0,2.5) {};
\node (a3) at ( 1.4,2.5) {};

\node (b1) at (-1.4,1.2) {};
\node (b2) at ( 0,1.2) {};
\node (b3) at ( 1.4,1.2) {};

\path (r) edge (a1);
\path (r) edge (a2);
\path (r) edge (a3);
\path (a1) edge (b1);
\path (a2) edge (b2);
\path (a3) edge (b3);

\end{tikzpicture}%
}
\caption{\label{fig:line-dag-d3-m2}
Tree-Line-DAG with $d=3$ disjoint chains and chain length $m=2$.
}
\end{minipage}
\hspace{0.025\textwidth}
\begin{minipage}[t]{0.55\textwidth}
\centering
\resizebox{\linewidth}{!}{%
\begin{tikzpicture}[
    >=Stealth,
    bottleneck/.style={circle, draw, thick, fill=gray!20, minimum size=8mm, inner sep=0pt},
    internal/.style={circle, draw, thick, minimum size=5mm, inner sep=0pt}
]

\node[bottleneck] (u1) at (0,0) {$u_1$};
\node[bottleneck] (v1) at (3,0) {$v_1$};
\foreach \j/\y in {1/1.5,2/0.5,3/-0.5,4/-1.5} {
    \node[internal] (a1\j) at (1,\y) {};
    \node[internal] (b1\j) at (2,\y) {};
    \draw[->, thick] (u1) -- (a1\j);
    \draw[->, thick] (a1\j) -- (b1\j);
    \draw[->, thick] (b1\j) -- (v1);
}

\node[bottleneck] (u2) at (4.7,0) {$u_2$};
\node[bottleneck] (v2) at (7.7,0) {$v_2$};
\foreach \j/\y in {1/1.5,2/0.5,3/-0.5,4/-1.5} {
    \node[internal] (a2\j) at (5.7,\y) {};
    \node[internal] (b2\j) at (6.7,\y) {};
    \draw[->, thick] (u2) -- (a2\j);
    \draw[->, thick] (a2\j) -- (b2\j);
    \draw[->, thick] (b2\j) -- (v2);
}

\draw[->, thick] (v1) -- (u2);

\end{tikzpicture}%
}
\caption{\label{fig:kldag}
The $(K,L)$ bottleneck DAG, shown with two bottleneck gadgets and $L=4$.
}
\end{minipage}
\vspace{-5pt}
\end{figure}

\subsection{Tree-Line-DAG: parallel entropy beats random}

\begin{definition}\label{def:tree-line-dag} Fix integers $d\ge 2$ and $m\ge 1$. Let $G({d,m})$ be the directed graph
with one root vertex $\rho$ and $d$ disjoint directed chains of length $m$
emanating from $\rho$:
\(
\rho \to v_{i,1}\to \cdots \to v_{i,m}.
\) for $i\in[d]$.
\end{definition}

\begin{restatable}{lemma}{entropybeatsrandom}
\label{lemma:entropybeatsrandom}
Let the context length be
\(
L=m+1
\) and Assumption \ref{ass:perfect-conditionals} hold.
Denote by $\coh^{\teum}(d, m)$ and $\coh^{\trum}(d, m)$ denote the probabilities of generating a coherent directed path in
$G(d, m)$ with
$\teum$, and $\trum$ samplers respectively.
Then
\[
\coh^{\teum}(d, m)=1,\qquad \coh^{\trum}(d, m)=
\frac{2}{m+1}
+
\frac{m-1}{d(m+1)}.
\]
\end{restatable}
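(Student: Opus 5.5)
The plan is to use the explicit form of the target law $\nu$ on $G(d,m)$ with $L=m+1$, and then follow each sampler round by round. With context length $L=m+1$, every coherent directed path has the form $\rho\to v_{I,1}\to\cdots\to v_{I,m}$ for some chain index $I\in[d]$. So $\nu$ is given by $X_1=\rho$ and $X_{j+1}=v_{I,j}$, where $I$ is uniform on $[d]$. A generated sequence is coherent iff position $1$ is $\rho$ and all of positions $2,\dots,m+1$ carry the same chain index.

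Two conditional facts drive everything, and both follow from Assumption~\ref{ass:perfect-conditionals}:
\begin{itemize}
\item[(i)] Under the empty context, $X_1$ is a point mass at $\rho$, with entropy $0$. Each $X_{j+1}$ is uniform over $\{v_{1,j},\dots,v_{d,j}\}$, with entropy $\log d>0$.
\item[(ii)] Under any positive-probability context revealing some chain position $v_{i,j}$, every remaining marginal is a point mass on chain $i$ (or on $\rho$).
\end{itemize}
By (ii), once a coherent context containing at least one chain position has been revealed, all subsequent parallel rounds sample deterministic tokens consistent with that context. Independence between coordinates then causes no error, so coherence is decided entirely in the first round.

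\textbf{The \teum sampler.} By (i), position $1$ is the unique minimizer of entropy. The second slot in the first block is a uniformly random position among $\{2,\dots,m+1\}$, since these tie. Sampling gives $X_1=\rho$ and some $X_{j+1}=v_{I,j}$, and this context is coherent. By (ii), all later rounds are deterministic and stay on chain $I$, so $\coh^{\teum}(d,m)=1$. When $m=1$ there is a single round covering both positions, and the same conclusion holds.

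\textbf{The \trum sampler.} Here the first block is a uniformly random pair among the $\binom{m+1}{2}$ pairs, and we split on whether it contains position $1$. There are $m$ pairs containing position $1$, so this happens with probability $m/\binom{m+1}{2}=2/(m+1)$. In that case the argument is the same as for \teum and coherence holds with probability $1$.

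Otherwise, with probability $(m-1)/(m+1)$, both positions are chain positions $j+1\neq j'+1$. They are sampled independently and uniformly over chain indices, so they agree with probability $1/d$.
\begin{itemize}
\item If they agree, the context is coherent with positive probability, and (ii) forces coherence of the rest.
\item If they disagree, the output is incoherent no matter what later rounds do. The assumption says nothing about zero-probability contexts, but that is irrelevant, since coherence is already lost.
\end{itemize}
Summing the two cases gives $\frac{2}{m+1}+\frac{m-1}{d(m+1)}$.

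The only delicate point is justifying that later parallel rounds introduce no further error. This is exactly fact (ii), together with the observation that the assumption only needs to be invoked on coherent, positive-probability contexts.
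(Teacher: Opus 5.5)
Your proposal is correct and follows essentially the same argument as the paper. In both, $\nu$ is uniform over the $d$ root-to-leaf chains; TE first reveals the zero-entropy root together with one chain coordinate, after which every marginal is a point mass; and for TR you split on whether the first random pair contains the root (probability $2/(m+1)$), with the remaining case succeeding exactly when the two independently sampled chain indices agree (probability $1/d$). Your explicit handling of $m=1$ and your remark that Assumption~\ref{ass:perfect-conditionals} is only invoked on positive-probability contexts are minor refinements that the paper leaves implicit.
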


The target path is determined by a single hidden chain index $I\in[d]$:
the root coordinate is deterministic, and every non-root coordinate reveals
the same index $I$. Lowest-entropy unmasking first selects the deterministic
root and one non-root coordinate, which identifies $I$ and makes all remaining
coordinates deterministic; random two-at-a-time unmasking succeeds only if its
first pair contains the root or if two independently sampled non-root
coordinates happen to choose the same chain. The detailed proof is deferred to Appendix~\ref{app:entropy_beats_random}.

\subsection{$(K,L)$ bottleneck DAG: parallel random beats entropy}
\begin{definition}\label{def:kldag}
    A $(K,L)$ bottleneck DAG is defined as a graph with $2K$ bottleneck nodes where each pair is connected by a corridor of $L$ parallel directed paths with two nodes each.
\end{definition}

\begin{restatable}{lemma}{randombeatsentropy}
\label{lemma:randombeatsentropy}
Let Assumption \ref{ass:perfect-conditionals} hold and $\coh^{\trum}(K)$, $\coh^{\teum}(K)$ denote the probabilities of a coherent path generation in the $(K,L)$ bottleneck DAG with \trum and \teum respectively. If $K$ is even and L > 1, then
  \bas{
\liminf_{K\rightarrow\infty}\bb{\coh_K^{\trum}-\coh_K^{\teum}}\geq \frac{2}{3}+\frac{1}{3L}.
}
\end{restatable}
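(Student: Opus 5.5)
The plan is to make the conditional-independence structure explicit, then analyze each sampler as a combinatorial process over which positions get revealed together. The walk has length $4K$ and runs $u_1\to a_1\to b_1\to v_1\to u_2\to\cdots\to v_K$. Here $(a_j,b_j)$ denote the two internal positions of corridor $j$.

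Under the target $\nu$, every bottleneck coordinate is deterministic. Each corridor independently picks a uniform route $I_j\in[L]$, and $a_j,b_j$ are both functions of $I_j$. So $\nu$ is a product over corridors, and under Assumption~\ref{ass:perfect-conditionals} the marginal of a masked coordinate depends only on whether its partner is revealed:
\begin{itemize}
\item bottleneck positions have entropy $0$;
\item an internal position has entropy $0$ if its partner is revealed, and $\log L$ otherwise.
\end{itemize}
Consequently a sampled path is incoherent only through some corridor $j$ in which $a_j$ and $b_j$ are revealed in the same round while both are still masked. Conditional on that event, the corridor is inconsistent with probability exactly $1-1/L$, independently across corridors. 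Every other pair is conditionally independent given the current context, so it is sampled exactly.

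For \teum, the $2K$ zero-entropy bottleneck positions are revealed first, two per round, which causes no error. Afterwards I will track the number $n$ of corridors with both internal positions masked and show the process alternates deterministically:
\begin{itemize}
\item when no zero-entropy position exists, \teum draws two uniformly random positions among the $2n$ fresh ones (ties broken uniformly);
\item if these two lie in different corridors, both partners become deterministic and are revealed together next round, returning to the zero state with $n-2$;
\item if they are partners, the corridor fails with probability $1-1/L$ and $n$ drops by one.
\end{itemize}
A fresh round with $n$ corridors pairs partners with probability $1/(2n-1)$, and once $n=1$ pairing is forced. Conditioning on the sequence of values of $n$, one obtains
\[
\coh_K^{\teum}\le \max_{\text{paths}}\prod (1-\tfrac{1-1/L}{2n-1})
\]
over the visited $n$. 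I still need a cleaner bound here. The aim is to show the probability of never pairing partners tends to $0$: either via the $\sum 1/(2n-1)$ divergence along the visited values, or more simply via the parity of the remaining fresh corridors. With $K$ even, the evenness assumption should force the last fresh round to contain a partner pair with probability bounded below. The target is $\coh_K^{\teum}\to 0$, or at least $\coh_K^{\teum}\le$ something making the gap exceed $\tfrac23+\tfrac1{3L}$.

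For \trum, since $4K$ is even and blocks always have size two, the reveal order is a uniformly random perfect matching of the $4K$ positions. Corridor $j$ can fail only if $a_j$ is matched with $b_j$, which happens with probability $1/(4K-1)$. A union bound gives
\[
1-\coh_K^{\trum}\le \Bigl(1-\tfrac1L\Bigr)\frac{K}{4K-1}\;\longrightarrow\;\tfrac14\Bigl(1-\tfrac1L\Bigr).
\]
Hence $\liminf\coh_K^{\trum}\ge \tfrac34+\tfrac1{4L}$.

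Combining the two limits yields a gap of at least $\tfrac34+\tfrac1{4L}-\limsup\coh_K^{\teum}$. The main obstacle is the \teum analysis, specifically proving that $\limsup\coh_K^{\teum}\le \tfrac1{12}(1-\tfrac1L)$ or smaller. I would first try to prove $\coh_K^{\teum}\to 0$ by analysing the Markov chain on $n$. If that chain mostly decreases by $2$, the visited $n$ are of order $K$ and the product $\prod(1-\tfrac{c}{2n-1})$ over roughly $K/2$ values still behaves like $K^{-c/2}\to 0$.
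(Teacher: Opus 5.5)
\textbf{Genuine gap: the \teum bound is never proved.} Your treatment of \trum is correct. Using $m=4K$ rather than the paper's crude $m\ge 3K$, you even get the sharper $\liminf_K\coh_K^{\trum}\ge\frac34+\frac1{4L}$. Your description of the \teum dynamics is also exactly the paper's recurrence: bottlenecks go first; then a fresh round pairs partners with probability $1/(2n-1)$, and otherwise a safe round reveals the two now-deterministic partners, so $n\to n-2$. Writing $c_n$ for the success probability with $n$ fresh corridors, this is $c_n=\frac{1}{(2n-1)L}c_{n-1}+\frac{2n-2}{2n-1}c_{n-2}$ with $c_0=1$. What is missing is a proof that $\coh_K^{\teum}\to 0$, or even that $\limsup_K\coh_K^{\teum}\le\frac1{12}(1-\frac1L)$. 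That is the step the lemma rests on, and you leave it open.

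Two of your suggested routes are also flawed. First, conditioning on the visited sequence of $n$ gives success probability $L^{-N}$, where $N$ is the number of unit drops. It does not give $\prod(1-\frac{1-1/L}{2n-1})$; that product bound comes from unrolling $c_n\le\bigl(1-\frac{1-1/L}{2n-1}\bigr)\max\{c_{n-1},c_{n-2}\}$. Second, the parity idea runs backwards. For even $K$, the path $K\to K-2\to\cdots\to 0$ never pairs partners, so evenness forces nothing. It is odd $K$ for which $N$ must be odd, hence $N\ge1$ and $\coh\le 1/L$. For even $K$, smallness comes only from the slow harmonic decay of the probability of always avoiding partner pairs. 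So you need a worst-case bound over all paths, not an argument that the chain ``mostly decreases by $2$''.

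\textbf{How to close it.} Every factor $1-\frac{1-1/L}{2n-1}$ is at most $1$, and steps have size $1$ or $2$. So any path from $K$ to $0$ visits at least one of $\{2i-1,2i\}$ for each $i\le K/2$, and the factor at a visited member is at most $1-\frac{1-1/L}{4i-1}$. Hence $\coh_K^{\teum}\le\prod_{i=1}^{K/2}\bigl(1-\frac{1-1/L}{4i-1}\bigr)\le\exp\bigl(-\frac{1-1/L}{4}\sum_{i=1}^{K/2}\frac1i\bigr)\to 0$. This decays like $K^{-(1-1/L)/4}$, not the $K^{-c/2}$ you guessed. The paper packages the same observation as $H_n:=\max\{c_n,c_{n-1}\}$ with $H_n\le\bigl(1-\frac{1-1/L}{2n-1}\bigr)H_{n-2}$, iterated over $n=2r$. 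With this step added your proof is complete, and your sharper \trum bound leaves slack to spare.
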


In the bottleneck DAG, errors arise only when two positions from the same
dangerous corridor are revealed together but the independently sampled tokens
belong to different parallel paths. \trum spreads its pairs across all
masked positions and therefore rarely hits the same corridor, whereas
\teum first consumes the low-entropy bottleneck nodes and leaves
many corridor positions to be paired with each other; Appendix~\ref{app:random_beats_entropy}
formalizes this by comparing the resulting success recurrences.


Viewed together, these separations suggest that a parallel unmasking rule should be judged by how each update reshapes the conditional dependencies among the still-masked coordinates. The aim is not merely to select individually predictable coordinates, but to reveal context that makes subsequent parallel updates safe. In the Tree-Line-DAG, entropy achieves this by fixing the shared chain index, whereas in the bottleneck DAG, random unmasking performs better because its dispersed updates rarely sample both unresolved positions of a corridor together. Marginal uncertainty is therefore only a proxy; the more fundamental objective is to construct conditionally valid parallel updates.

A similar principle may apply to creative language tasks: fixing a proof strategy or key lemma, a story twist or joke mechanism, or the central thesis of an idea can constrain the dependent details that follow. More generally, revealing structural anchors may divide the remaining generation into weakly coupled subproblems. For graph walks, the next section makes this idea exact through Markov separators and bisection sampling.

\section{A new bisection sampler}
\label{sec:bisection_sampler}

Motivated by the Markov separator structure of graph random walks, we introduce bisection sampling. For a first-order random walk, revealing $X_t$ separates the past and future: $X_{<t} \perp\!\!\!\perp X_{>t} \mid X_t$, i.e. $X_{<t}$ and $X_{>t}$ are conditionally independent given $X_t$. Thus, one can reveal the midpoint, then recursively reveal midpoints of the remaining subintervals. For an order-$k$ random walk, the separator is a contiguous block of $k$ tokens rather than a single token. In each active masked interval, we reveal its middle block of size $\min\{k,\ell\}$ sequentially; once revealed, this block separates the left and right subintervals, which are processed recursively in parallel. We choose middle blocks rather than purely lowest-uncertainty pivots, since score-only choices may lie near an endpoint and create highly imbalanced splits. Algorithm~\ref{alg:k-bisection-sampling} provides the detailed implementation.

\begin{algorithm}[h]
\caption{Order-$k$ bisection sampling}
\label{alg:k-bisection-sampling}
\begin{algorithmic}[1]
\Require Length $L$, order $k$, conditionals $\q_i(\cdot \mid \apx_U)$, initial unmasked set $U$
\While{$U \neq \{1,\ldots,L\}$}
    \State Let $I_1,\ldots,I_{m-1} \subseteq U$ be the current unmasked separator chunks
    \State Let $B_1,\ldots,B_m$ be the masked chunks between consecutive separators
    \ForAll{$B_j=\{a_j,\ldots,b_j\}$ in parallel}
        \State $\ell_j \gets |B_j|$ and $r_j \gets \min\{\ell_j,k\}$
        \State Choose the middle contiguous block $S_j=\{\tau_j,\tau_j+1,\ldots,\tau_j+r_j-1\}\subseteq B_j$
    \EndFor
    \For{$h=1,\ldots,k$}
        \ForAll{$j$ such that $h\le |S_j|$ in parallel}
            \State Let $i$ be the $h$th position in $S_j$
            \State Sample $\apx_i \sim \q_i(\cdot \mid \apx_U)$
        \EndFor
        \State Add all positions sampled in this substep to $U$
    \EndFor
\EndWhile
\State \Return $\apx_{1:L}$
\end{algorithmic}
\end{algorithm}

\subsection{ Exactness of bisection sampling}
\label{subsec:bisection_sampling_analysis}

Now we show that under Assumption~\ref{ass:perfect-conditionals}, Algorithm~\ref{alg:k-bisection-sampling} returns a correct sample from a conditional or an unconditional distribution. We provide detailed proofs of the results in Appendix~\ref{app:bisection_sampling_analysis}.

\begin{lemma}[Conditional independence for order-$k$ Markov bridges, informal]
\label{lem:k-bridge-ci-informal}
Let $\nu_k^{s,t}$ be an endpoint-conditioned order-$k$ Markov bridge. If the
revealed set $U$ contains the endpoints and contiguous separator blocks of
length at least $k$, then the masked intervals between separators are
conditionally independent given $X_U$. Hence, for blocks $S_j$ in distinct
masked intervals and any feasible $x_U$,
\[
\nu_k^{s,t}\!\left(
    X_{S_1}=x_{S_1},\ldots,X_{S_m}=x_{S_m}
    \mid X_U=x_U
\right)
=
\prod_{j=1}^m
\nu_k^{s,t}\!\left(
    X_{S_j}=x_{S_j}\mid X_U=x_U
\right).
\]
\end{lemma}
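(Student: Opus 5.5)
The plan is to prove the factorization directly from the product form of the path law, using the fact that the bridge $\nu_k^{s,t}$ is just $\nu_k$ restricted to $\{X_1=s,X_L=t\}$ and renormalized. Conditioning further on $X_U=x_U$ with $1,L\in U$ therefore gives
\[
\nu_k^{s,t}(X_{1:L}=x_{1:L}\mid X_U=x_U)\;\propto\;\rho_k(x_{1:k})\prod_{\ell=k}^{L-1}P_k(x_{\ell+1}\mid x_{\ell-k+1:\ell}),
\]
viewed as a function of the masked coordinates $x_{[L]\setminus U}$ with $x_U$ held fixed. The bridge conditioning only enters through the indicator of the endpoints, which are fixed by $x_U$. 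So it suffices to show that this unnormalized weight factorizes as a product of functions, each depending on the coordinates of a single masked interval $B_j$ (together with fixed revealed values). A product-form density on a product space yields independent factors, and marginalizing each factor to $S_j\subseteq B_j$ gives the displayed identity.

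The key step is a locality argument. Each transition factor $P_k(x_{\ell+1}\mid x_{\ell-k+1:\ell})$ involves the window of $k+1$ consecutive positions $\{\ell-k+1,\dots,\ell+1\}$. Suppose two masked intervals $B_i$ and $B_j$ with $i<j$ are separated by at least one revealed contiguous block of length $\ge k$. Then any window of length $k+1$ that meets both $B_i$ and $B_j$ would have to contain that whole block plus one position on each side, so it would need length at least $k+2$, which is impossible. Hence every factor touches at most one masked interval. I would group the factors accordingly: factors touching no masked coordinate are constants, and the others are assigned to the unique interval they meet.

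The initial factor $\rho_k(x_{1:k})$ needs the same treatment. Since position $1$ is revealed, $x_{1:k}$ can meet masked coordinates only in the first masked interval, provided any revealed separator before $B_2$ is long enough. More precisely, I would state the hypothesis as follows: consecutive masked intervals are separated by revealed runs of length $\ge k$, and the window $\{1,\dots,k\}$ meets at most one masked interval. This is exactly what "contiguous separator blocks of length at least $k$" should mean in the formal version, and it is what Algorithm~\ref{alg:k-bisection-sampling} guarantees.

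Writing the conditional as $Z^{-1}\prod_j g_j(x_{B_j})$ with $Z=\prod_j\sum_{x_{B_j}}g_j(x_{B_j})$ gives joint independence of $(X_{B_j})_j$ given $X_U=x_U$. Feasibility of $x_U$ guarantees $Z>0$. Summing out $B_j\setminus S_j$ then yields the claim.

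The main obstacle is bookkeeping at the boundaries, not the idea. There are three things to check. First, the revealed endpoint singletons $\{1\}$ and $\{L\}$ have length $1<k$, but they never sit between two masked intervals, so they don't need to separate anything. Second, I have to verify that the bisection invariant indeed keeps every internal separator run of length $\min\{k,\ell\}$. When $\ell<k$ the whole interval is revealed, so nothing is left to separate. Third, I have to treat $\rho_k$ properly. I would handle all three by stating the formal lemma for an arbitrary revealed set satisfying the "no $(k+1)$-window or initial window meets two masked intervals" condition, and then checking that the algorithm's revealed sets satisfy it by induction on rounds.
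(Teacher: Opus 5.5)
Your proof is correct, but it takes a different route from the paper's.

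The paper argues at the level of conditional-independence statements. It takes as given that, under the unconditioned chain $\nu_k$, the groups $(X_1,X_{S_1}), X_{S_2},\dots,X_{S_{m-1}},(X_{S_m},X_L)$ are mutually independent given the internal separators $X_{I_1},\dots,X_{I_{m-1}}$. It then passes to the bridge by dividing by $\nu_k(X_1,X_L\mid X_{I_1},\dots,X_{I_{m-1}})$, using that $X_1\perp X_L$ given the separators.

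You work directly with the unnormalized density of $\nu_k^{s,t}(\cdot\mid X_U=x_U)$. A window of $k+1$ consecutive positions cannot contain a revealed run of length $\ge k$ plus a masked position on each side. So every transition factor, and $\rho_k$, touches at most one masked interval, and the conditional law is a product over intervals.

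Your route buys several things:
\begin{itemize}
\item It actually proves the Markov-separator factorization that the paper only asserts.
\item It shows directly why endpoint conditioning is harmless: the endpoints are just more coordinates of $U$, so no separate Bayes/division step is needed.
\item It covers outer revealed chunks $I_0,I_m$ longer than a single endpoint and the initial law $\rho_k$. The paper's display implicitly takes $I_0=\{1\}$, $I_m=\{L\}$.
\item It gives joint independence of the whole intervals $X_{B_j}$, hence the result for arbitrary $S_j\subseteq B_j$. This is exactly what the proof of Lemma~\ref{lem:k-bisection-exact} needs when blocks are revealed one coordinate per substep and one must condition on partially revealed prefixes. The paper handles that with a one-line remark; under your formulation it is immediate.
\end{itemize}
The paper's argument is shorter and mirrors the separator intuition, but it rests on the unproved separator claim that your factorization establishes.

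One minor remark: your extra hypothesis that $\{1,\dots,k\}$ meets at most one masked interval follows from the separator-length condition by the same window-counting argument. A length-$k$ window cannot contain a run of $\ge k$ revealed positions with a masked position on each side, so you need not assume it separately.
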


For an order-$k$ Markov chain, a revealed contiguous block of length at least
$k$ contains all memory needed for transitions crossing that location. Hence,
once such separator chunks are fixed, the Markov factorization breaks the
bridge likelihood into independent factors over the masked intervals; endpoint
conditioning only fixes the outer boundary values and does not recouple
intervals separated by revealed length-$k$ blocks.

\begin{restatable}[Exactness of order-$k$ bisection sampling]{lemma}{kBisectionExact}
\label{lem:k-bisection-exact}
Let $\nu$ be an order-$k$ random-walk law on $X_{1:L}$, either unconditional or
conditioned on fixed endpoints. In the conditioned case, assume the endpoints
belong to the initial revealed set $U_0$. Suppose the model conditionals satisfy Assumption~\ref{ass:perfect-conditionals}, i.e. agree
with the true one-coordinate conditional marginals of $\nu$ at every context
visited by Algorithm~\ref{alg:k-bisection-sampling}. If
$\apxnu_k^{\mathrm{bis}}$ denotes the law of the output $\apx_{1:L}$, then
$\apxnu_k^{\mathrm{bis}}=\nu$. Moreover, the parallel sampling depth is
$\mathcal O\!\left(k(1+\log(L/k))\right)$.
\end{restatable}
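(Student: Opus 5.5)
We prove exactness by tracking the sampler one substep at a time. In each substep of Algorithm~\ref{alg:k-bisection-sampling}, the positions sampled in parallel are the $h$th positions of the blocks $S_j$, one per active masked chunk $B_j$. The key invariant is structural. At the start of every outer round, the revealed set $U$ is a union of two kinds of pieces: the initial set $U_0$ (empty, or the endpoints $\{1,L\}$), and fully completed separator blocks. Each separator block was produced in an earlier round as the middle block $S_j$ of a masked chunk, so it either has length exactly $k$ or fills its whole chunk. Consequently, every maximal masked interval $B_j$ is bordered on each side by the sequence boundary, a revealed endpoint, or a contiguous revealed run of length at least $k$. We establish this invariant by induction on rounds; filling a chunk of length $\ell_j\le k$ creates no new masked interval.

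Within a round, during substep $h$, the set $U$ additionally contains the first $h-1$ positions of each $S_j$. These lie inside the chunks $B_j$, so different chunks remain separated by the old separators. We then apply Lemma~\ref{lem:k-bridge-ci-informal} with the revealed set $U$ at that substep. The masked coordinates inside distinct original chunks $B_j$ are conditionally independent given $X_U$. The extra revealed positions inside a chunk only condition that chunk's own factor, since the Markov factorization splits across the separators regardless.

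For the unconditional case we need the analogous statement. It follows from the same factorization $\nu_k(x)=\rho_k(x_{1:k})\prod P_k(\cdot)$: every factor involves a window of $k+1$ consecutive indices, and such a window cannot meet two chunks separated by a revealed run of $k$ positions. Hence the conditional density given $x_U$ is a product of functions, each depending on one chunk's masked coordinates. The joint law of the $h$th-substep positions $\{i_j\}$ is therefore $\prod_j \nu(X_{i_j}=\cdot\mid X_U=x_U)$. By Assumption~\ref{ass:perfect-conditionals} this equals the product of model marginals that the algorithm actually samples. Each substep thus samples exactly from the true block conditional $\nu(X_{B}=\cdot\mid X_U)$, where $B$ is that substep's set of positions. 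Chaining these through the product rule, exactly as in the proof of Lemma~\ref{lem:any-order-exactness}, shows that the output has law $\nu$; in the bridge case the law is $\nu$ conditioned on the revealed endpoints. Contexts visited all have positive probability, since they are drawn from the true conditionals, so the assumption applies at each of them.

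For depth, each round uses at most $k$ substeps. A chunk of length $\ell$ is split into two chunks of length at most $\lceil(\ell-k)/2\rceil\le \ell/2$, or is finished if $\ell\le k$. Starting from $L$, all chunks drop to length at most $k$ after $O(\log(L/k))$ rounds, and one more round finishes them. This gives depth $O(k(1+\log(L/k)))$. The main obstacle is the bookkeeping in the separation step. We must verify that the revealed separators really are length-$k$ contiguous runs, including near the sequence boundary where $\rho_k$ involves the first $k$ coordinates; that window is also of width at most $k+1$, so it is handled by the same factor-support argument. We also need the partially revealed block inside a chunk not to break the chunk-wise product form. We handle both by arguing directly on the factor supports of the Markov likelihood rather than only citing the informal lemma.
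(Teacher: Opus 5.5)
Your proposal is correct and follows essentially the same route as the paper. Both arguments use conditional independence across the masked chunks separated by revealed runs of length at least $k$, applied at each substep, together with exact one-coordinate marginals, the chain rule, and constant-factor shrinkage of chunks for the depth bound. The one difference is that you check the separator invariant and the chunkwise factorization directly from the supports of the Markov likelihood factors, which treats the unconditional case and the partially revealed blocks uniformly. The paper instead cites Lemma~\ref{lem:k-bridge-ci} for the blocks $S_j$ given $X_{U_r}$ and then asserts that conditioning on already revealed prefixes preserves the cross-chunk independence.
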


At each bisection level, Algorithm~\ref{alg:k-bisection-sampling} selects one
new separator block inside each currently masked interval. By
Lemma~\ref{lem:k-bridge-ci-informal}, these blocks are conditionally independent given
the current revealed context, so sampling their coordinates from exact
one-coordinate conditionals gives the correct joint block conditional. Chaining
this argument over bisection levels gives exactness, and choosing middle
blocks shrinks every active interval by a constant factor, giving
$\mathcal O(k\log(L/k))$ parallel depth.

\subsection{Score-guided bisection sampling}
\label{subsec:score_guided_bisection_sampling}

To combine score-guided unmasking with balanced bisection, we use the score
to choose a pivot inside a balanced region. In each
active masked chunk, the sampler first restricts attention to the middle half
and chooses the lowest-uncertainty pivot there based on the provided score. It then grows a contiguous
separator block by repeatedly revealing the lower-uncertainty frontier neighbor based on the same score.
The middle-half restriction ensures that each recursive subproblem shrinks by a
constant factor, while the contiguous growth produces a valid order-$k$
separator. Hence the parallel depth remains $O(k\log(L/k))$. The full algorithm
is provided in Appendix~\ref{app:score_guided_bisection_sampling}.

\section{Experiments}
\label{sec:experiments}

We use the graph-walk MDM setup from Section~\ref{subsec:mdlm-graph-walks} to
experimentally illustrate how the underlying latent graph structure affects the performance of different parallel samplers.  In all
graph experiments, the vocabulary is the vertex set $V$ (and a $\mask$ token) with
\(
|V|=500.
\)
For each graph, we train an MDM on random-walk samples from the unconditional
path law \(\nu\) described previosuly.

\paragraph{ST-ER$\bm{(p)}$ graphs.}
We begin by constructing a spanning-tree backbone. Starting from a
single root vertex, we grow a connected set one vertex at a time: at each step,
we attach one not-yet-added vertex to one vertex already in the connected set, both chosen uniformly at random.
After \(|V|-1\) steps, this gives a spanning tree \(T\), and therefore guarantees
that the graph is connected. We then add each remaining non-tree edge
independently with probability \(p\).
The tree backbone guarantees connectivity, while the Erd\H{o}s-R\'enyi
edges controls the average degree.

\paragraph{Bottleneck graphs.}
Our second family creates explicit community bottlenecks. These graphs are dense within communities but have rare edges across them. We use them
for endpoint-conditioned bridge generation: placing the endpoints in different communities forces the model to discover these inter-community bottleneck edges. This is a test of model's \textit{compositional} reasoning: the model sees ample walks within the community, however crossings between communities are rare in training.

\subsection{Metrics}
\label{sec:exp-metrics}

Graph walks allow exact validation without likelihood estimates or judge models.
For an order-$k$ walk with initial law $\rho_k$ and transition kernel $P_k$,
we define the sequence-level coherence of a generated sample $\widetilde X_{1:L}$ as
\[
\operatorname{coh}_k(\widetilde X_{1:L})
=
\mathbf{1}\{\rho_k(\widetilde X_{1:k})>0\}
\prod_{\ell=k}^{L-1}
\mathbf{1}\!\left\{
P_k(\widetilde X_{\ell+1}\mid \widetilde X_{\ell-k+1:\ell})>0
\right\}.
\]
We report coherence by averaging this quantity over generated samples. Thus
coherence is a support-validity check: it equals one iff the generated sequence
is a valid walk under the data-generating rule, and becomes zero after the first
illegal transition.

Coherence does not measure whether the sampler matches the correct transition
statistics. For unconditional first-order walks, we also report row-weighted
transition total variation. More generally, let $\widehat P_k(\cdot\mid h)$ be
the empirical next-token distribution after history $h\in V^k$, and let
$\widehat\omega(h)$ be the empirical frequency of that history. We define
\[
\mathrm{TV}_k
=
\frac12
\sum_{h\in V^k:\widehat\omega(h)>0}
\widehat\omega(h)
\sum_{v\in V}
\left|
\widehat P_k(v\mid h)-P_k(v\mid h)
\right|.
\]
In the main experiments we report $\mathrm{TV}_1$ only, since estimating
$\widehat P_k$ over $V^k$ becomes sparse for $k>1$.

\subsection{Unconditional random-walk generation}
\label{sec:exp-unconditional}

In the unconditional task, generation starts from the fully masked sequence and
targets the unconditioned path law \(\nu\). We evaluate unmasking policies on
ST-ER graphs while varying the graph density. The main evaluation
metrics are coherence for correctness and transition \(\mathrm{TV}\) for distribution fidelity.

In Figure \ref{fig:unconditional_coherence_mst}, we show both of these metrics on two graphs: ST-ER($p=0$), which is just a spanning tree and ST-ER($p=0.01$), where we add random edges as described above. Similar to Figure \ref{fig:conditional_coherence}, we see that changing just the edge density significantly changes the performance gap between random and entropy based samplers, both in terms of coherence and the transition TV. Moreover, bisection-style samplers substantially reduce the NFEs while preserving
coherence and transition fidelity, whereas aggressive random or entropy-based parallel unmasking
can lose coherence depending on the graph structure.
\begin{figure}[!hbt]
    \centering
    \includegraphics[width=\linewidth]{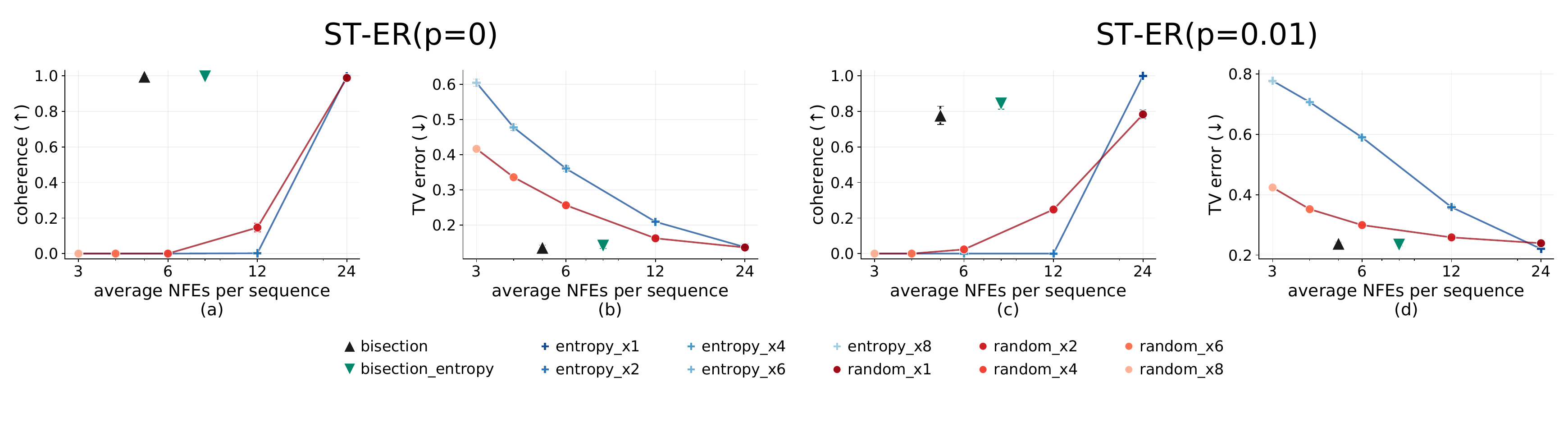}
    \caption{
Unconditional random-walk generation on ST-ER graphs.
(a,b) report coherence and TV error on ST-ER($p=0$).
 (c,d) report the same metrics on ST-ER($p=0.01$).
}
    \label{fig:unconditional_coherence_mst}
\end{figure}

\subsection{Endpoint-conditioned bridge generation}
\label{sec:exp-bridges}
\begin{figure}[!hbt]
    \centering
    \includegraphics[width=0.8\linewidth]{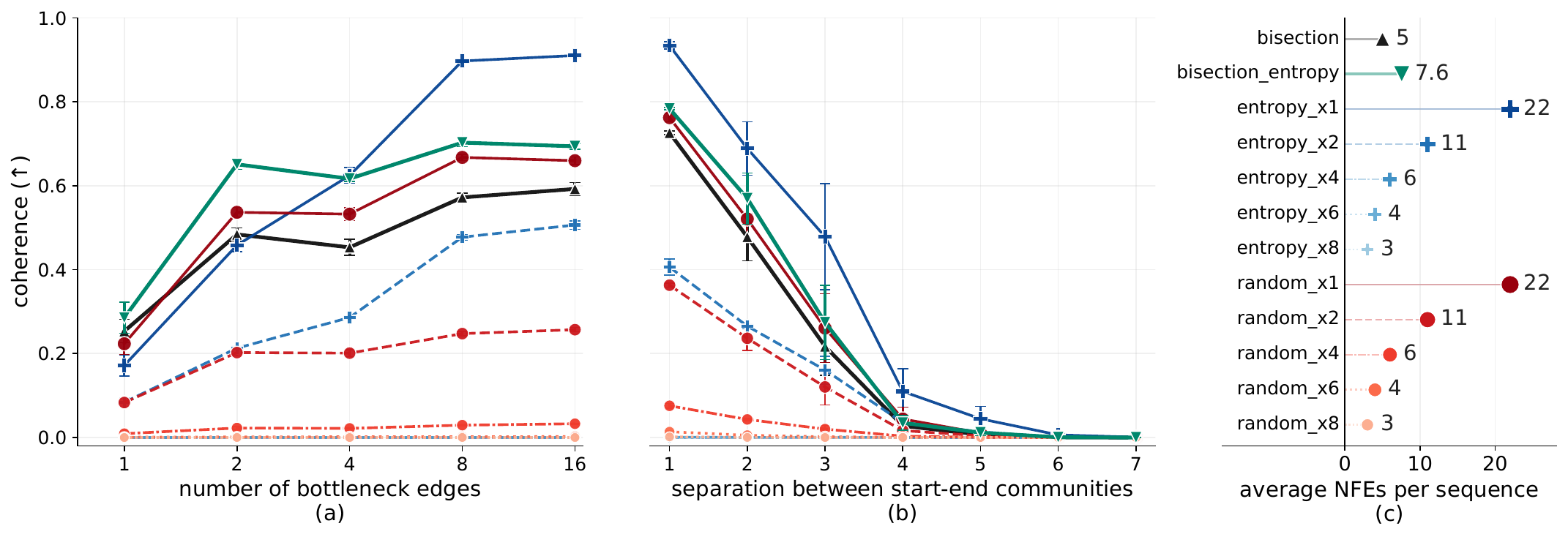}
    \caption{
Endpoint-conditioned bridge generation on bottleneck graph families.
 (a) shows the two-community bottleneck task
(b) shows the chain-of-communities task
(c) reports the average NFEs per sequence for each sampler.
}
    \label{fig:coherence_bottleneck_edges}
\end{figure}

In the endpoint-conditioned task, we take a held-out path \(x_{1:L}\), reveal
\(x_1\) and \(x_L\), and ask the sampler to fill in the bridge. We evaluate only coherence in this setting. We do not report transition TV for endpoint
conditioning, because the bridge law has endpoint-dependent transition
statistics different from the unconditional kernel \(P\). Figure \ref{fig:conditional_coherence} shows our results for this for the ST-ER$(p)$ graphs. We also use two bottleneck tests that probe compositional bridge generation.

\paragraph{Two-community bottleneck.}
The first setting has two subgraphs $V = V_1\sqcup V_2,
\;
|V_1|=|V_2|=250.$
Each subgraph is sampled independently from the ST-ER$(p)$ family. We then add
\(b\) inter-subgraph edges between \(V_1\) and \(V_2\), and vary \(b\). A separate model is trained for each $b$. For
testing, the revealed start and end vertices lie in different subgraphs. As \(b\) decreases, the bridge becomes harder since the sampler must discover a rarer crossing. This tests whether the sampler can compose local
motion inside each community with the global requirement of crossing between
communities.

Figure \ref{fig:coherence_bottleneck_edges} (a) shows the performance of different sampling strategies with their corresponding average NFEs in Figure \ref{fig:coherence_bottleneck_edges} (c). We observe that in this case, even the one-at-a-time sequential random sampler performs better than the corresponding sequential entropy counterpart, while order flips as the bottleneck is relaxed. Moreover, our entropy-guided bisection sampler outperforms all baselines for stronger bottlenecks while being far more computationally efficient.
\paragraph{Chain of communities.}
The second setting is a chain of ten communities. We partition $V = V_1\sqcup\cdots\sqcup V_{10},
\;
|V_j|=50.$
Each \(G_j\) is again sampled from the same ST-ER$(p)$ family. Let \(r_j\in V_j\) be the root
of the spanning tree used to construct \(G_j\). Consecutive communities are
connected by a single edge
\(
(r_j,r_{j+1}) \text{ for } j=1,\ldots,9,
\)
and there are no edges between non-consecutive communities.

We test endpoints in increasingly distant communities. For a separation \(h\),
we choose endpoints in \(V_j\) and \(V_{j+h}\). To construct the test set, we
first place the required root-to-root crossing
\(
r_j,r_{j+1},\ldots,r_{j+h}
\)
in the middle of the walk. We then generate the left part by randomly walking
inside \(G_j\) from \(r_j\) and reversing the segment, and generate the right
part by randomly walking inside \(G_{j+h}\) from \(r_{j+h}\). At test
time, only the endpoints are revealed; the intermediate roots and bottleneck
crossings are masked. By construction, there exists at least one bridge between the given start and end nodes at the specified distance, so the model must discover at least one such bridge.

This creates a long-range compositional reasoning problem for the sampler. To produce a coherent bridge, it must infer not just one bottleneck crossing, but a sequence
of hidden community crossings whose length increases with \(h\). Figure \ref{fig:coherence_bottleneck_edges} (b) shows the performance of different sampling strategies. Entropy-based sampler works well when run sequentially, however becomes comparable or worse than the random sampler when run in parallel. Bisection samplers still maintain coherence that is competitive to sequential samplers while being far more computationally efficient.

\subsection{Order $K$ random walks}
\label{sec:exp-distant-k}

We next evaluate samplers on a graph-walk task with deliberately nonlocal
dependencies. Unlike an ordinary random walk, where $X_{\ell+1}$ is sampled
from the neighbors of the current state $X_\ell$, this process samples
$X_{\ell+1}$ from neighbors of distant ancestors in the trajectory. This makes
the task a sharper test of whether a sampler preserves higher-order structure:
a generated sequence may look locally plausible while still violating the true
distant-history rule. We write the path as $X_{1:L}$. Given a history $X_{1:\ell}$, define
$A_\ell=\{j:\texttt{skip\_recent}+1 \le j \le K,\ j \le \ell-1\}$. If
$A_\ell \neq \emptyset$, the transition kernel is
\[
P(X_{\ell+1}=v \mid X_{1:\ell})
=
\frac{1}{|A_\ell|}
\sum_{j\in A_\ell}
\frac{\1\{v\in N(X_{\ell-j})\}}{|N(X_{\ell-j})|}.
\]
If $A_\ell=\emptyset$, we use the one-step non-lazy random-walk kernel
$P(X_{\ell+1}=v \mid X_{1:\ell})
=
\1\{v\in N(X_\ell)\}/|N(X_\ell)|$.
In our main experiment, $K=4$ and $\texttt{skip\_recent}=2$, so once enough
history is available, $A_\ell=\{3,4\}$. Thus $X_{\ell+1}$ is sampled from a
neighbor of either $X_{\ell-3}$ or $X_{\ell-4}$, while the two most recent
states are ignored by the data-generating rule.

The same
coherence metric is used for the support-validity check as above, specialized to the
nonlocal transition rule.
A generated transition is valid if either $A_\ell=\emptyset$ and
$\widetilde X_{\ell+1}\in N(\widetilde X_\ell)$, or $A_\ell\neq\emptyset$ and
$\widetilde X_{\ell+1}\in N(\widetilde X_{\ell-j})$ for some $j\in A_\ell$.
Writing $\phi_\ell(\widetilde X_{1:L})$ for this indicator, we report $\operatorname{coh}_{\mathrm{dist}\text{-}K}(\widetilde X_{1:L})
=
\prod_{\ell=1}^{L-1}\phi_\ell(\widetilde X_{1:L}),$
averaged over generated samples.

We train on an ST-ER${(p)}$ graph with $|V|=500$, $p=0.01$,
walk length $L=24$, and $50{,}000$ training trajectories. As before, the model
sees only sampled trajectories; the graph and the distant-history transition
rule are not provided to it. We compare divide-half bisection,
entropy-guided bisection, and one-token random and entropy baselines. For each
sampler setting, we generate $1024$ samples, split them into four batches of
size $256$, and report mean distant-$K$ coherence with standard-deviation error
bars. Figure~\ref{fig:distant-k-memory-sweep} shows that bisection samplers improve
as the sampler memory order approaches the true dependency scale $K=4$.
Divide-half bisection remains slightly below the random one-token baseline,
whereas entropy-guided bisection achieves substantially higher coherence. The
entropy one-token sampler performs best overall, but requires one model call per
generated coordinate. Thus the distant-history experiment again shows the
central speed--coherence tradeoff: bisection-style schedules recover the
benefit of adaptive sequential unmasking at substantially smaller parallel
depth.

\begin{figure}[t]
    \centering
    \begin{minipage}[c]{0.50\linewidth}
        \centering
        \includegraphics[width=\linewidth]{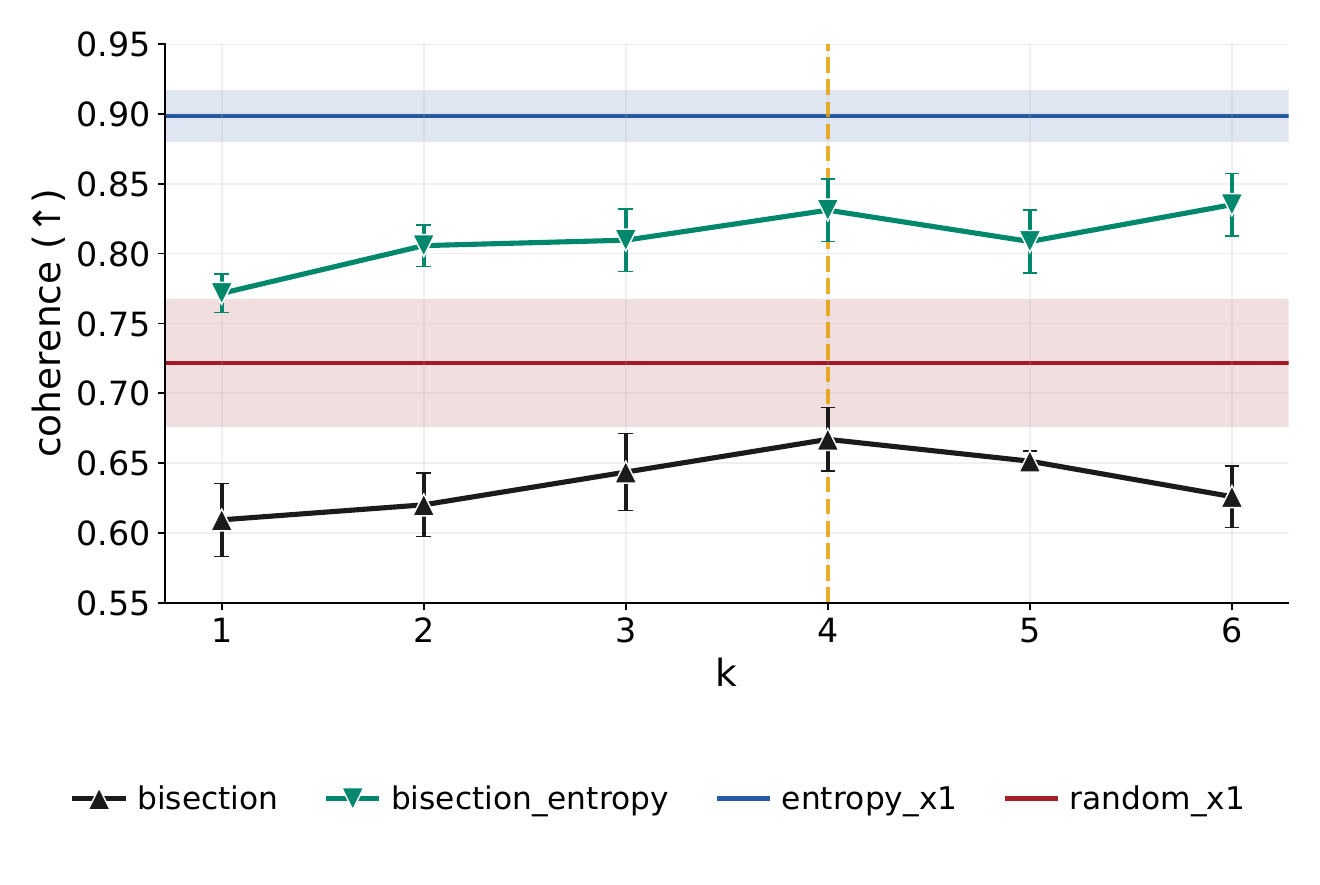}
    \end{minipage}
    \hfill
    \begin{minipage}[c]{0.45\linewidth}
        \caption{
        Distant-$K$ coherence versus sampler memory order. The true data-generating
        history parameter is $K=4$ (vertical dashed line). Dotted horizontal lines
        show random and entropy one-token baselines. Error bars are standard
        deviations across four batch means.
        }
        \label{fig:distant-k-memory-sweep}
    \end{minipage}
\end{figure}

\section{Implications for language}
\label{sec:language_experiments}

\begin{figure}[t]
    \centering
    \includegraphics[width=\linewidth]{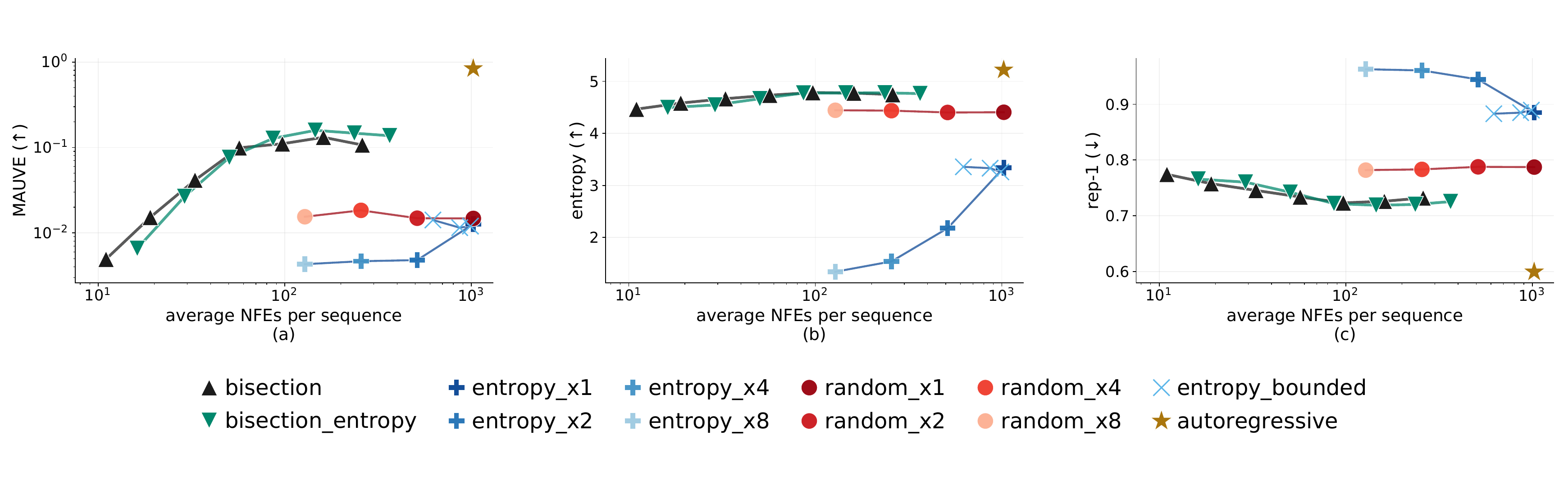}
    \caption{
    Speed-quality tradeoffs for language generation with a pretrained OpenWebText MDLM
    using different samplers. We report (a) MAUVE ($\uparrow$);
    (b) entropy ($\uparrow$); (c) token repetition rate ($\downarrow$).
    The x-axis is the average NFEs per sequence.
    }
    \label{fig:language_experiments_main}
\end{figure}

\paragraph{Setup.}
We evaluate sampler quality on the OpenWebText (OWT) dataset \cite{Gokaslan2019OpenWeb}
using 1024 generated samples per method with top-$p=0.9$ sampling. We use the MDLM
checkpoint trained on OWT \cite{sahoo2024simple, wang2025remasking}
and compare the main sampler families shown in Figure~\ref{fig:language_experiments_main}.
The x-axis reports average NFEs per sequence. In the main paper, we report MAUVE
\cite{pillutla2021mauve} with 2048 reference validation samples, token entropy and uni-gram repetition scores. MAUVE measures the distributional similarity between generated and reference text by comparing their representations in a quantized embedding space; higher MAUVE indicates that generated samples more closely match the reference text distribution. Additional metrics like generative perplexity and bi-/tri-gram repetition scores are deferred to  Appendix~\ref{app:additional_language_experiments}. Higher is better for MAUVE and entropy,
while lower is better for repetition.

\paragraph{Results.}
Figure~\ref{fig:language_experiments_main} shows a clear compute--quality tradeoff.
The autoregressive baseline is strongest overall, but uses roughly $1023$ NFEs per
sequence. Among non-autoregressive methods, bisection-based samplers perform best:
entropy-guided bisection reaches MAUVE $\approx 0.16$ at about $145$ NFEs, while
standard bisection reaches MAUVE $\approx 0.13$ at about $161$ NFEs. Both versions also maintain comparable or better diversity than all other non-autoregressive baselines.

Low generative perplexity alone is not predictive of quality: as we show in Figure \ref{fig:language_experiments_appendix}  in Appendix~\ref{app:additional_language_experiments}, constant-entropy and
entropy-bounded samplers achieve low perplexity, but have poor MAUVE, consistent with
prior observations \citep{wang2025remasking}. Overall,
bisection, especially entropy-guided bisection, gives the strongest non-autoregressive
speed-quality tradeoff in the main-paper metrics, suggesting that although the
bisection sampler is motivated by exact conditional separation in random walks, the same coarse-to-fine unmasking pattern improves language generation as well.

\section{Conclusion and future directions}

We introduced graph random walks as a controlled benchmark for understanding parallel sampling in masked diffusion models. The graph is hidden from the model but available for evaluation, giving direct checks of support validity through coherence and distributional fidelity through transition TV. Within this benchmark, we showed that parallel unmasking depends on conditional independence rather than local uncertainty alone, and proposed bisection-style samplers that exploit Markov separators to obtain logarithmic parallel depth under perfect conditional marginals.

Several directions remain open. First, our theory assumes exact conditional marginals; an important next step is to quantify how estimation error in the denoiser propagates through sequential, parallel, and bisection samplers, including why learned one-token random and entropy samplers can behave differently despite exact sequential sampling being order-invariant. Second, the bisection sampler is tailored to finite-order Markov structures, whereas language exhibits longer-range dependencies that do not vanish beyond a fixed context window. Developing adaptive coarse-to-fine samplers that learn approximate separator structure from model uncertainty or hidden states is a natural extension. Third, graph walks provide many additional stress tests beyond those studied here, including directed graphs, weighted kernels, nonreversible walks, time-inhomogeneous Markov chains, and larger community hierarchies. Finally, the encouraging OpenWebText results suggest that insights from graph walks can inform practical decoding for large masked language models; scaling these experiments and combining bisection with remasking or verifier-guided correction are promising future directions.

\section*{Acknowledgments}

SK gratefully acknowledges funding support from the Amazon AI PhD Fellowship. PS gratefully acknowledges NSF grants 2217069 and CCF-2505865. VB is grateful to Dr.\ Vaishnavh Nagarajan (Google DeepMind) for discussions during a separate collaboration that deepened his understanding of planning problems and reasoning in language models, providing useful background for the present work. We all thank Dr.\ Adam Klivans and the Institute for Foundations of Machine Learning (IFML) at UT Austin for providing the computing resources.

\bibliographystyle{alpha}
\bibliography{refs}

\newpage
\appendix

\textbf{Appendix organization.}
The appendix is organized as follows.
\begin{itemize}
    \item \textbf{Appendix~\ref{app:prelim-related}}: Provides additional preliminaries and related work on masked diffusion models, any-order unmasking, and higher-order random walks on graphs.

    \item \textbf{Appendix~\ref{app:any_order_unmasking}}: Gives the deferred proof that exact one-token-at-a-time unmasking is order-invariant under perfect conditional marginals.

    \item \textbf{Appendix~\ref{app:entropy_beats_random}}: Proves the Tree-Line-DAG separation, where entropy-based two-at-a-time unmasking succeeds while random two-at-a-time unmasking can fail.

    \item \textbf{Appendix~\ref{app:random_beats_entropy}}: Proves the bottleneck-DAG separation, where random two-at-a-time unmasking outperforms entropy-based two-at-a-time unmasking.

    \item \textbf{Appendix~\ref{app:bisection_sampling_analysis}}: Contains the formal conditional-independence and exactness proofs for order-$k$ bisection sampling.

    \item \textbf{Appendix~\ref{app:score_guided_bisection_sampling}}: Gives the full score-guided bisection algorithm used to combine balanced recursive splitting with entropy-based pivot choices.

    \item \textbf{Appendix~\ref{app:additional_language_experiments}}: Provides additional language experiments on the OpenWebText dataset

    \item \textbf{Appendix~\ref{app:compute-resources}}: Reports compute resources and runtime details for the graph-walk training and sampler-evaluation experiments.

    \item \textbf{Appendix~\ref{app:existing-assets}}: Provides licenses for external codebase and dataset used.

    \item \textbf{Appendix~\ref{app:coherence-tables}}: Provides additional coherence tables for graph random-walk samplers across graph families, conditioning regimes, and unmasking policies.
\end{itemize}

\section{Additional preliminaries and related work}
\label{app:prelim-related}

The main text uses masked diffusion models through a black-box conditional
marginal view: given a partially revealed sequence $x_U$, the denoiser returns
one-coordinate distributions $p_i(\cdot\mid x_U)$ for masked coordinates
$i\notin U$. This appendix records the standard MDLM details behind this view
and expands the related-work discussion.

\subsection{Additional details on masked diffusion models}

\paragraph{Absorbing corruption.}
Masked diffusion models for discrete data commonly use an absorbing-state
corruption process~\citep{austin2021structured,campbell2022continuous,
sahoo2024simple,shi2024simplified}. Let $\voc$ be the clean vocabulary and let
$\mask\notin \voc$ be the absorbing mask token. For a decreasing schedule
$\alpha_t\in[0,1]$ with $\alpha_0=1$ and $\alpha_1=0$, the forward process masks
coordinates independently:
\[
q_t(z\mid x)
=
\prod_{i=1}^L
\bb{
\alpha_t \delta_{x_i}(z_i)
+
(1-\alpha_t)\delta_{\mask}(z_i)
}.
\]
Thus $Z_t$ is exactly a partially revealed version of $X$, with revealed set
$U_t=\{i:Z_{t,i}\neq \mask\}$.

\paragraph{Reverse update.}
For $0\le s<t\le 1$, if $Z_{t,i}\neq \mask$, the absorbing coordinate is copied
unchanged. If $Z_{t,i}=\mask$, then the exact reverse posterior, conditioned on
the clean token, unmasks with probability
$\rho_{s,t}:=(\alpha_s-\alpha_t)/(1-\alpha_t)$ and otherwise remains masked:
\[
\Pr(Z_{s,i}=\cdot\mid Z_{t,i}=\mask,X_i)
=
\rho_{s,t}\delta_{X_i}(\cdot)
+
(1-\rho_{s,t})\delta_{\mask}(\cdot).
\]
Since $X_i$ is unknown at generation time, the learned sampler replaces
$\delta_{X_i}$ by a denoiser $p_{\theta,i}(\cdot\mid Z_t)$. Equivalently, a
finite reverse step first chooses a subset of currently masked coordinates and
then samples each chosen coordinate independently from its one-coordinate
denoising distribution. In the infinitesimal limit $s=t-dt$, $\rho_{s,t}=O(dt)$,
so for fixed $L$ the probability of revealing two or more coordinates in one
infinitesimal interval is $O(dt^2)$. This motivates viewing continuous-time
MDLM sampling as a one-coordinate-at-a-time jump process, while finite-step or
accelerated samplers may reveal multiple coordinates from the same denoiser call.

\paragraph{Training objective.}
The usual MDLM objective is a schedule-weighted denoising cross-entropy over
masked coordinates~\citep{sahoo2024simple,shi2024simplified}. In the notation of
the main text, a typical loss has the form
\[
\cL(\theta)
=
\E_{X\sim\nu,\;t,\;Z_t\sim q_t(\cdot\mid X)}
\sum_{i:Z_{t,i}=\mask}
w(t)\,
\bb{-\log p_{\theta,i}(X_i\mid Z_t)}
,
\]
for a nonnegative schedule-dependent weight $w(t)$. Our theoretical results do
not depend on the exact choice of $w(t)$; they only use the induced
conditional-marginal interpretation
$p_i(\cdot\mid x_U)\approx \nu(X_i=\cdot\mid X_U=x_U)$, formalized in
Assumption~\ref{ass:perfect-conditionals}.

\subsection{Related work}

\paragraph{Discrete and masked diffusion models.} Diffusion models for discrete data have been developed through several closely related formalisms. Early work introduced structured categorical corruptions and multinomial diffusion processes~\citep{austin2021structured,hoogeboom2021argmax}, while continuous-time formulations model the forward and reverse dynamics as jump processes or continuous-time Markov chains~\citep{campbell2022continuous,sun2023scorebased}. More recent approaches improve the learning objective or parameterization for discrete denoising, including score-entropy ratio estimation and simplified masked objectives~\citep{lou2024sedd,sahoo2024simple,shi2024simplified}. Discrete flow models and discrete flow matching give another view in which probability paths on finite state spaces are learned through posterior or denoising objectives~\citep{campbell2024generative,gat2024discrete}. Our work is complementary to these modeling advances: rather than proposing a new training objective, we study how a trained masked denoiser should be queried at inference time.

\paragraph{Diffusion language models.} Several works adapt diffusion or denoising ideas to text generation. Continuous-diffusion language models map tokens through continuous embeddings and enable controllable generation~\citep{li2022diffusionlm,gong2023diffuseq}, while masked or absorbing-state language diffusion models operate directly over token vocabularies~\citep{he2023diffusionbert,sahoo2024simple,shi2024simplified}. Recent large-scale diffusion language models and block diffusion models suggest that diffusion-style generation can scale to stronger language modeling regimes and interpolate between autoregressive and non-autoregressive generation~\citep{ye2023diffusionllm,arriola2025block,nie2025llada}. These models motivate our focus on sampling schedules: if masked diffusion language models are to be competitive at scale, one needs fast parallel samplers that do not destroy global sequence coherence.

\paragraph{Any-order generation and token ordering.} Masked language modeling began as a representation-learning objective in models such as BERT~\citep{devlin2019bert}, but the same ability to condition on arbitrary visible tokens also enables any-order generation. Any-order-autoregressive and masked generative models exploit this flexibility by choosing which coordinate to reveal next~\citep{uria2014deep,uria2016nade,shih2022anyorder}. In the exact one-token setting, every reveal order is valid by the chain rule, but practical orderings can substantially affect sample quality. MaskGIT popularized confidence-based iterative decoding~\citep{chang2022maskgit}, and recent work shows that token ordering matters sharply in masked diffusion models, including structured tasks such as Sudoku~\citep{kim2025train}. Our graph benchmark isolates the same phenomenon in a setting where the latent constraint structure is controllable and directly checkable.

\paragraph{Parallel and accelerated masked decoding.} The main computational appeal of masked diffusion is that multiple tokens can be revealed per denoiser call. Recent samplers accelerate generation by choosing larger blocks, adaptive block sizes, or by adding inference-time mechanisms such as KV caching, slow-fast schedules, auto speculation, and remasking~\citep{benhamu2025accelerated,wu2025fastdllm,wei2025slowfast,anari2025autospeculation,wang2025remasking}. These methods improve the speed-quality tradeoff, but a parallel update implicitly replaces a true block conditional by a product of one-coordinate marginals. Our contribution is to make this approximation explicit: on graph walks, whether a parallel update is safe is determined by conditional independence in the underlying sequence distribution.

\paragraph{Random walks and higher-order network dynamics.} Random walks and higher-order Markov models are standard tools for modeling sequential structure on graphs. Higher-order variants capture memory effects in web navigation, network flows, community detection, and complex networks \citep{chierichetti2012web,rosvall2014memory,krzakala2013spectral,benson2016higher,benson2017spacey}. We use these models in a different role: the walk distribution is a controlled sequence distribution for training and testing masked diffusion samplers. The graph and transition rule are hidden from the model but available to the evaluator, giving exact checks of support validity through coherence and, for first-order unconditional walks, distributional fidelity through transition TV.

\paragraph{Structured evaluation for generative models.} Open-ended text generation is difficult to evaluate with a single scalar metric; distributional metrics such as MAUVE compare generated and human text in an embedding space~\citep{pillutla2021mauve}, and our language experiments use OpenWebText as the evaluation domain~\citep{Gokaslan2019OpenWeb}. Graph walks serve a different purpose: they are not intended as a replacement for language evaluation, but as a mechanistic sandbox where failures can be attributed to support violations or transition-statistic errors. This makes them useful for diagnosing sampler behavior before transferring the resulting schedules back to language generation.

\section{Deferred Proofs from Section~\ref{sec:theoreitcal_parallel_sampling}}
\label{app:any_order_unmasking}

\begin{restatable}[Exactness of sequential unmasking]{lemma}{anyorderexactness}
\label{lem:any-order-exactness}
Let $\nu$ be a target distribution on paths $X_{1:L}=(X_1,\ldots,X_L)$, and let
$\sigma=(\sigma_1,\ldots,\sigma_L)$ be an ordering of $\{1,\ldots,L\}$.
Suppose Assumption~\ref{ass:perfect-conditionals} holds for this order $\sigma$.
Consider the sequential sampler
\(
\apx_{\sigma_r}\sim \q_{\sigma_r}(\cdot\mid \apx_{U_{r-1}})
\),
where $U_{r-1}:=\{\sigma_1,\ldots,\sigma_{r-1}\}$, for $r=1,\ldots,L$.
Let $\apxnu_\sigma$ denote the law of $\apx_{1:L}$. Then $\apxnu_\sigma=\nu$.
\end{restatable}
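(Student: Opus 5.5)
The plan is to compute the probability of an arbitrary full assignment under the sampler and show it telescopes, via the chain rule, to $\nu(x_{1:L})$. Fix $x_{1:L}\in V^L$ and the order $\sigma$. By construction of the sequential sampler, $\apx_{\sigma_r}$ is drawn from $\q_{\sigma_r}(\cdot\mid \apx_{U_{r-1}})$ given the past draws. Hence
\[
\apxnu_\sigma(x_{1:L})=\prod_{r=1}^{L}\q_{\sigma_r}\bigl(x_{\sigma_r}\mid x_{U_{r-1}}\bigr).
\]

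I would split into two cases according to whether $\nu(x_{1:L})>0$. If $\nu(x_{1:L})>0$, then every prefix marginal satisfies $\nu(X_{U_{r-1}}=x_{U_{r-1}})\ge\nu(x_{1:L})>0$. So Assumption~\ref{ass:perfect-conditionals} applies at every step and replaces each factor by $\nu(X_{\sigma_r}=x_{\sigma_r}\mid X_{U_{r-1}}=x_{U_{r-1}})$. Writing each conditional as a ratio $\nu(X_{U_r}=x_{U_r})/\nu(X_{U_{r-1}}=x_{U_{r-1}})$, the product telescopes to $\nu(X_{U_L}=x_{U_L})/\nu(X_{U_0}=x_{\emptyset})=\nu(x_{1:L})$, using $U_L=[L]$ and $U_0=\emptyset$.

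If $\nu(x_{1:L})=0$, the proportions no longer directly apply. The cleanest route is to avoid this case: since $\apxnu_\sigma$ and $\nu$ are both probability distributions on $V^L$, and they agree on the support of $\nu$ whose total $\nu$-mass is $1$, we get $\sum_{x:\nu(x)>0}\apxnu_\sigma(x)=1$. Therefore $\apxnu_\sigma$ assigns zero mass off the support of $\nu$. This sidesteps the fact that the assumption says nothing about the model at infeasible contexts.

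The only delicate point is this support issue: the conditionals are only controlled at contexts of positive $\nu$-probability. The mass-counting argument above handles it without needing an inductive argument that the sampler never visits infeasible prefixes. An alternative, if preferred, is an induction on $r$ showing $\apx_{U_r}$ has law $\nu(X_{U_r}=\cdot)$. This works because each sampled coordinate is drawn from the true conditional and so stays in the support, giving $\apx_{U_{r}}$ positive $\nu$-probability almost surely.
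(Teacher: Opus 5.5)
Your proof is correct. The positive-mass case is the same as the paper's: every prefix marginal is positive, Assumption~\ref{ass:perfect-conditionals} applies at each step, and the chain rule telescopes to $\nu(x_{1:L})$.

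The two proofs differ only in the null case $\nu(x_{1:L})=0$. The paper argues pointwise. It takes the first index $r$ with $\nu(X_{U_r}=x_{U_r})=0$ and notes that the preceding context $x_{U_{r-1}}$ is still feasible, so the assumption applies there. The factor $\q_{\sigma_r}(x_{\sigma_r}\mid x_{U_{r-1}})=\nu(X_{\sigma_r}=x_{\sigma_r}\mid X_{U_{r-1}}=x_{U_{r-1}})=0$ therefore kills the product, whatever the model does at the later, infeasible contexts. You instead use normalization: once $\widetilde\nu_\sigma$ agrees with $\nu$ on the support of $\nu$, that support already carries all of the sampler's mass, so none is left for infeasible paths.

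Your route never has to reason about the sampler's factors at infeasible contexts. It is, however, global: it relies on exact agreement on the entire support and on $\widetilde\nu_\sigma$ being a normalized law. The paper's route is local: it identifies the single step at which the sampler would leave the support and shows that this step has probability zero. That is the same mechanism as your alternative induction, and it is the form that adapts more naturally if one later wants to track where errors enter under imperfect conditionals.
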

\begin{proof}
Fix $x_{1:L}\in \voc^L$. If $\nu(x_{1:L})>0$, then every prefix
$x_{U_{k-1}}$ has positive $\nu$-probability, so Assumption~\ref{ass:perfect-conditionals}
and the probability chain rule give
\bas{
\apxnu_\sigma(x_{1:L})
&=
\prod_{k=1}^L
\q_{\sigma_k}(x_{\sigma_k}\mid x_{U_{k-1}})  =
\prod_{k=1}^L
\nu(X_{\sigma_k}=x_{\sigma_k}\mid X_{U_{k-1}}=x_{U_{k-1}})
=
\nu(x_{1:L}).
}
If $\nu(x_{1:L})=0$, let $k$ be the first index such that
$\nu(X_{U_k}=x_{U_k})=0$. Then
$\nu(X_{U_{k-1}}=x_{U_{k-1}})>0$, while
$\nu(X_{\sigma_k}=x_{\sigma_k}\mid X_{U_{k-1}}=x_{U_{k-1}})=0$. By Assumption~\ref{ass:perfect-conditionals}, the corresponding
multiplicative term in the sampler probability is also zero: $p_{\sigma_r}(x_{\sigma_r}\mid x_{U_{r-1}})=0.$
Hence $\widetilde\nu_\sigma(x_{1:L})=0$.
\end{proof}

\section{A toy graph where entropy-two-at-a-time wins}
\label{app:entropy_beats_random}

\entropybeatsrandom*
\begin{proof}[Proof of Lemma~\ref{lemma:entropybeatsrandom}]
The target distribution is supported on the $d$ paths indexed by
$I\in[d]$. The first coordinate is deterministic:
\(
X_1=\rho.
\)
Every non-root coordinate reveals the same hidden chain index $I$:
\[
X_{t+1}=v_{I,t},\qquad t=1,\ldots,m.
\]

We first analyze entropy unmasking. Initially,
\(
H(X_1)=0,
\)
since $X_1$ is deterministic. For every non-root coordinate $t+1$,
\[
X_{t+1}\sim \mathrm{Unif}\{v_{1,t},\ldots,v_{d,t}\},
\]
so
\(
H(X_{t+1})=\log d.
\)
Therefore the greedy lowest-entropy two-at-a-time rule first selects the root
coordinate $X_1$ and one non-root coordinate $X_j$, $j\ge 2$.

The sampler draws $X_1=\rho$ deterministically. It also draws
\(
X_j=v_{i,j-1}
\)
for some $i\in[d]$. This reveals the hidden chain index $I=i$. Once $I=i$ is
known, every remaining coordinate is deterministic:
\[
X_{t+1}=v_{i,t},\qquad t=1,\ldots,m.
\]
Thus all later conditional marginals are point masses, and no inconsistency
can ever be introduced. Hence
\(
\coh^{\mathrm{TE}}_{d,m}=1.
\)

Now consider random two-at-a-time unmasking. The first random block is a
uniformly random two-element subset of the $L=m+1$ positions. The probability
that this block contains the root coordinate is
\[
\frac{L-1}{\binom{L}{2}}
=
\frac{2}{L}
=
\frac{2}{m+1}.
\]
If the first block contains the root, then the other selected coordinate
reveals the chain index $I$, after which all remaining coordinates are
deterministic. Thus this case succeeds with probability $1$.

On the complementary event, the first block contains two non-root coordinates.
Suppose the two selected positions are $a,b\ge 2$. Under the product-marginal
parallel update, the sampler draws
\[
X_a=v_{\widehat I_a,a-1},
\qquad
X_b=v_{\widehat I_b,b-1},
\]
where
\(
\widehat I_a,\widehat I_b
\overset{\mathrm{i.i.d.}}{\sim}
\mathrm{Unif}([d]).
\)
These two sampled coordinates are jointly extendable to a valid path if and
only if they come from the same chain:
\(
\widehat I_a=\widehat I_b.
\)
This happens with probability $1/d$.
If they disagree, no valid path in $G_{d,m}$ contains both sampled vertices,
so the sampler has already failed. If they agree, then the common value fixes
$I$, and all remaining coordinates are deterministic, so the sampler succeeds.
Therefore
\[
\coh^{\mathrm{TR}}_{d,m}
=
\frac{2}{m+1}
+
\left(1-\frac{2}{m+1}\right)\frac1d
=
\frac{2}{m+1}
+
\frac{m-1}{d(m+1)}.
\]
\end{proof}

\section{A toy graph where random-two-at-a-time wins}
\label{app:random_beats_entropy}

\randombeatsentropy*
\begin{proof}[Proof of Lemma~\ref{lemma:randombeatsentropy}]
Let $n$ denote the number of dangerous corridors remaining. Each dangerous
corridor contains a pair of internal positions. If these two positions are
unmasked in the same parallel round, then the independently sampled channel
choices agree with probability $1/L$ and disagree with probability $1-1/L$.
All other pairs of simultaneously unmasked positions are safe, because they
either lie in different corridors or contain a bottleneck position.

We first lower bound the success probability of \trum. Since the coordinate
choices of \trum are independent of the sampled values, the sequence of
two-at-a-time choices induces a uniformly random matching $M$ on the initially
masked positions. Let $m$ be the number of initially masked positions. For each
dangerous pair $D_i$, the probability that $D_i$ appears as an edge of $M$ is
$1/(m-1)$. Therefore, by a union bound,
\bas{
1-\coh_K^{\trum}
&=
\Pr\bb{\text{\trum fails}}  \\
&\le
\sum_{i=1}^{K}
\Pr\bb{D_i\in M}\Pr\bb{\text{channel mismatch}\mid D_i\in M} \\
&=
\frac{K}{m-1}\bb{1-\frac1L}.
}
The $(K,L)$ bottleneck DAG has at least $3K$ initially masked positions. Hence
\[
1-\coh_K^{\trum}
\le
\frac{K}{3K-1}\bb{1-\frac1L},
\]
and consequently
\[
\liminf_{K\to\infty} \coh_K^{\trum}
\ge
1-\frac13\bb{1-\frac1L}
=
\frac23+\frac{1}{3L}.
\]

We now upper bound the success probability of \teum. The lowest-entropy
coordinates are the bottleneck coordinates, so the first $K$ two-at-a-time
rounds reveal all $2K$ bottleneck nodes. After this, the only remaining
possible errors are the $K$ dangerous internal pairs. Let $\coh_n^{\teum}$ denote
the probability of generating a coherent sequence with entropy-based unmasking when $n$ dangerous pairs remain.
If the next entropy round selects the two positions from the same dangerous
pair, which occurs with probability $1/(2n-1)$, then it succeeds with
probability $1/L$ and leaves $n-1$ dangerous pairs. Otherwise it selects
positions from two different dangerous pairs, which occurs with probability
$(2n-2)/(2n-1)$, and leaves $n-2$ dangerous pairs. Therefore
\bas{
\coh_n^{\teum}
=
\frac{1}{2n-1}\cdot\frac{1}{L}\coh_{n-1}^{\teum}
+
\frac{2n-2}{2n-1}\coh_{n-2}^{\teum}.
}
Set $\coh_0^{\teum}=1$ and define
\[
H_n:=\max\{\coh_n^{\teum},\coh_{n-1}^{\teum}\}.
\]
Then
\bas{
\coh_{n-1}^{\teum}
&\le
\bb{
\frac{1}{2n-3}\cdot\frac{1}{L}
+
\frac{2n-4}{2n-3}
}H_{n-2},\\
\coh_n^{\teum}
&\le
\bb{
\frac{1}{2n-1}\cdot\frac{1}{L}
+
\frac{2n-2}{2n-1}
}H_{n-2}
=
\bb{1-\frac{1-1/L}{2n-1}}H_{n-2}.
}
Thus
\[
H_n
\le
\bb{1-\frac{1-1/L}{2n-1}}H_{n-2}.
\]
For $n=2r$, using $H_1\le 1$ and $H_2\le 1$,
\bas{
H_{2r}
&\le
\prod_{i=2}^{r}
\bb{1-\frac{1-1/L}{4i-1}} \\
&\le
\exp\bb{
-\frac{1-1/L}{4}\sum_{i=2}^{r}\frac1i
}
\le
\bb{\frac{2}{r+1}}^{(1-1/L)/4}.
}
In particular, $\coh_K^{\teum}\to 0$ along even $K$. Combining this with the lower
bound for \trum gives
\[
\liminf_{K\to\infty}
\bb{\coh_K^{\trum}-\coh_K^{\teum}}
\ge
\frac23+\frac{1}{3L}.
\]
\end{proof}

\section{Deferred Proofs from Section~\ref{subsec:bisection_sampling_analysis}}
\label{app:bisection_sampling_analysis}

\begin{restatable}[Conditional independence for order-$k$ Markov bridges]{lemma}{kBridgeCI}
\label{lem:k-bridge-ci}
Let $\nu_k$ be the law of an order-$k$ Markov chain on $X_{1:L}$, and let
$\nu_k^{s,t} := \nu_k(\cdot \mid X_1=s, X_L=t)$ be its endpoint-conditioned
bridge law. Let $U \subseteq [L]$ contain the endpoints, and suppose the
revealed coordinates in $U$ form contiguous separator chunks $I_0,\ldots,I_m$,
ordered from left to right, with $|I_j|\ge k$ for every internal chunk
$1\le j\le m-1$. Let $B_j$ be the masked interval between $I_{j-1}$ and
$I_j$, and let $S_j\subseteq B_j$ be any contiguous block of length
$\min\{|B_j|,k\}$. Then, for every $x_U$ with
$\nu_k^{s,t}(X_U=x_U)>0$,
\[
\nu_k^{s,t}\!\left(
    X_{S_1}=x_{S_1},\ldots,X_{S_m}=x_{S_m}
    \mid X_U=x_U
\right)
=
\prod_{j=1}^m
\nu_k^{s,t}\!\left(
    X_{S_j}=x_{S_j}\mid X_U=x_U
\right).
\]
\end{restatable}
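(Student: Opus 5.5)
The plan is to prove a stronger statement: conditional on $X_U=x_U$, the whole masked intervals $X_{B_1},\ldots,X_{B_m}$ are mutually independent under $\nu_k^{s,t}$. The claim for the sub-blocks $S_j\subseteq B_j$ then follows by marginalizing each factor over $B_j\setminus S_j$. Since $\nu_k^{s,t}(\cdot\mid X_U=x_U)=\nu_k(\cdot\mid X_U=x_U)$ whenever $U\ni 1,L$ and $x_1=s$, $x_L=t$ (the bridge conditioning is absorbed into conditioning on $U$), it suffices to work with the unconditioned path law $\nu_k$. The same argument then covers the unconditional case as well.

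The key step is a factorization of the joint mass. Fix $x_U$ with positive probability and any completion $x_{B}=(x_{B_1},\ldots,x_{B_m})$. Write
\[
\nu_k(x_{1:L})=\rho_k(x_{1:k})\prod_{\ell=k}^{L-1}P_k(x_{\ell+1}\mid x_{\ell-k+1:\ell}).
\]
Assign each transition factor, indexed by its target position $\ell+1$, to a group. If $\ell+1\in B_j$ or $\ell+1\in I_j$, its window $\{\ell-k+1,\ldots,\ell+1\}$ has length $k+1$. Since every internal chunk has length at least $k$, this window cannot straddle a full internal chunk. Hence the window meets masked coordinates from at most one interval $B_j$, namely the nearest masked interval to the left or the one containing $\ell+1$. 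So every factor depends on $x_U$ together with coordinates from at most one $B_j$. The initial factor $\rho_k(x_{1:k})$ is handled the same way: positions $1,\ldots,k$ can meet at most $B_1$, since $I_0\ni 1$ and any later masked interval lies beyond a chunk of length at least $k$ (or beyond the end). Grouping the factors gives
\[
\nu_k(x_U,x_B)=c(x_U)\prod_{j=1}^m g_j(x_U,x_{B_j}).
\]
Dividing by $\nu_k(X_U=x_U)=c\prod_j\sum_{x_{B_j}}g_j$ yields the product form $\prod_j g_j/\sum g_j$, which is exactly mutual conditional independence of the $X_{B_j}$. The conditional marginal of $X_{B_j}$ is $g_j/\sum g_j$. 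Summing over $B_j\setminus S_j$ in each factor gives the stated identity for the $S_j$.

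The main obstacle is the bookkeeping that each length-$(k+1)$ transition window touches masked coordinates of at most one interval. Care is needed in the edge cases: the boundary chunks $I_0$ and $I_m$ may have length less than $k$ (for instance, only the endpoints are revealed), and the initial law $\rho_k$ covers the first $k$ positions. I would handle this by noting that two distinct masked intervals $B_j$ and $B_{j'}$ with $j<j'$ are separated by at least the internal chunk $I_j$, which has $k$ consecutive positions. So any masked $a\in B_j$ and $b\in B_{j'}$ satisfy $b-a\ge k+1$, which exceeds the window span $k$. The boundary chunks never separate two masked intervals, so their lengths are irrelevant.

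Finally, I would check the positivity issue: $\nu_k(X_U=x_U)>0$ ensures that each $\sum_{x_{B_j}}g_j>0$, so every division is legitimate.
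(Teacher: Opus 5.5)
Your proof is correct, but it is organized differently from the paper's.

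The paper works with conditional-independence statements. It asserts that, given only the internal chunks $X_{I_1},\ldots,X_{I_{m-1}}$, the groups $(X_1,X_{S_1}),X_{S_2},\ldots,X_{S_{m-1}},(X_{S_m},X_L)$ are independent. It then divides by $\nu_k(X_1,X_L\mid X_{I_1},\ldots,X_{I_{m-1}})$, using that $X_1$ and $X_L$ are conditionally independent given the internal chunks, to pass to the bridge law.

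You avoid the division step by noting that the bridge conditioning is absorbed into conditioning on $X_U$, since $1,L\in U$. You then derive the separation directly from the path likelihood: every transition factor, and the initial factor $\rho_k$, touches masked coordinates of at most one $B_j$, because distinct masked intervals are at distance at least $k+1$.

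The paper's version is shorter and makes the separator picture vivid, but it has two soft spots. First, it takes the Markov separation across length-$k$ chunks for granted rather than proving it. Second, by conditioning on $X_1,X_{I_1},\ldots,X_{I_{m-1}},X_L$, it tacitly assumes $I_0=\{1\}$ and $I_m=\{L\}$. Longer boundary chunks do occur in Algorithm~\ref{alg:k-bisection-sampling}: with $k=1$, $L=8$ and lower-middle choices, one reaches $U=\{1,2,4,6,8\}$ with three masked intervals.

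Your factorization is self-contained and handles boundary chunks of any length and the unconditional law at once. It also gives a stronger statement: the whole intervals $X_{B_1},\ldots,X_{B_m}$ are mutually independent given $X_U$. Both the block identity and the prefix-conditioned versions used in the exactness proof follow directly from it.
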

\begin{proof}
First note that, conditioned on the $I_1,\dots I_{m-1}$, we have the following conditional independence structure.
\bas{
&\nu_k(X_1,X_{S_1},X_{S_2},\dots, X_{S_{m}},X_L|X_{I_1},\dots,X_{I_{m-1}})\\
&=\nu_k(X_1,X_{S_1}|X_{I_1},\dots,X_{I_{m-1}})\prod_{i=2}^{m-1}\nu_k(X_{S_i}|X_{I_1},\dots,X_{I_{m-1}})\nu_k(X_{S_{m}},X_{L}|X_{I_1},\dots,X_{I_{m-1}})
}
Dividing by $\nu(X_1,X_L|X_{I_1},\dots,X_{I_{m-1}})$ and noticing that $X_1$ and $X_L$ are conditionally independent given $X_{I_1},\dots,X_{I_{m-1}}$, we see,
\bas{
&\nu_k(X_{S_1},X_{S_2},\dots, X_{S_{m}}|X_1,X_{I_1},\dots,X_{I_{m-1}},X_L)\\
&=\nu_k(X_{S_1}|X_1,X_{I_1},\dots,X_{I_{m-1}})\prod_{i=2}^{m-1}\nu_k(X_{S_i}|X_1,X_{I_1},\dots,X_{I_{m-1}},I_L)\nu_k(X_{S_{m}}|X_1,X_{I_1},\dots,X_{I_{m-1}},X_L)
}
The last line holds because for all internal $S_i$ with $i\in [2,m-1]$, $X_{S_i}$ and $X_1$ (and $X_L$) are conditionally independent given $I_1,\dots,I_m$.
\end{proof}

\kBisectionExact*
\begin{proof}
Let $U_r$ be the unmasked set at the beginning of bisection level $r$. Let
\(
I_1,\ldots,I_{m_r-1}
\)
be the current unmasked separator chunks, and let
\(
B_1,\ldots,B_{m_r}
\)
be the masked chunks induced by these separators. For each masked chunk $B_j$,
Algorithm~\ref{alg:k-bisection-sampling} chooses a contiguous block $S_j\subseteq B_j,
\; k_j:=|S_j|=\min\{|B_j|,k\}.$
The positions inside each $S_j$ are then unmasked sequentially, while different
chunks are processed in parallel.

By Lemma~\ref{lem:k-bridge-ci}, conditional on the current unmasked variables
$X_{U_r}=x_{U_r}$, the selected blocks from different masked chunks are
conditionally independent:
\[
\nu
\left(
X_{S_1}=x_{S_1},\ldots,X_{S_{m_r}}=x_{S_{m_r}}
\mid X_{U_r}=x_{U_r}
\right)
=
\prod_{j=1}^{m_r}
\nu
\left(
X_{S_j}=x_{S_j}
\mid X_{U_r}=x_{U_r}
\right).
\]

Now fix a realization $x_{1:L}$ in the support of $\nu$. For each block $S_j$,
write its positions in the order used by the algorithm as $S_j=\{s_{j,1},\ldots,s_{j,k_j}\}.$
Because the blocks $S_j$ are conditionally independent given $X_{U_r}$,
conditioning additionally on already unmasked prefixes of these blocks preserves
independence across chunks. We will denote by a substep $h$ the sequential unmasking step of each $S_j$. Therefore, at each substep $h$, the next positions
\(
\{s_{j,h}: h\le k_j\}
\)
are conditionally independent given the current unmasked variables.

Let $U$ denote
\[U := U_r \cup \{s_{1,1},\ldots,s_{1,h-1}\}\cup\cdots\cup
\{s_{m_r,1},\ldots,s_{m_r,h-1}\}.\]
Hence, conditional on the current generated values $\apx_{U}=x_U$, the algorithm
samples the substep-$h$ positions with probability
\[
\prod_{j}
p_{s_{j,h}}(x_{s_{j,h}}\mid x_U).
\]
By the perfect conditional marginal assumption (Assumption~\ref{ass:perfect-conditionals}),
\[
\q_{s_{j,h}}(x_{s_{j,h}}\mid x_U)
=
\nu(X_{s_{j,h}}=x_{s_{j,h}}\mid X_U=x_U).
\]
Using the conditional independence from Lemma~\ref{lem:k-bridge-ci}, this product
is exactly the true conditional law of the positions sampled at substep $h$.

Thus, over all $h=1,\ldots,k$, the whole level samples exactly from the true
conditional law of the newly unmasked blocks:
\[
\q
\left(
\apx_{S_1}=x_{S_1},\ldots,\apx_{S_{m_r}}=x_{S_{m_r}}
\mid
\apx_{U_r}=x_{U_r}
\right)
=
\nu
\left(
X_{S_1}=x_{S_1},\ldots,X_{S_{m_r}}=x_{S_{m_r}}
\mid
X_{U_r}=x_{U_r}
\right).
\]

After the level is completed, update
\[
U_{r+1}
=
U_r\cup S_1\cup\cdots\cup S_{m_r}.
\]
Applying the blockwise chain rule over all bisection levels gives
\[
\apxnu_k^{\mathrm{bis}}(x_{1:L})
=
\prod_r
\nu
\left(
X_{S_1^{(r)}}=x_{S_1^{(r)}},\ldots,
X_{S_{m_r}^{(r)}}=x_{S_{m_r}^{(r)}}
\mid
X_{U_r}=x_{U_r}
\right)
=
\nu(x_{1:L}).
\]
Therefore
\(
\apxnu_k^{\mathrm{bis}}=\nu.
\)

It remains to bound the number of rounds. At each bisection level, the algorithm
unmasks a contiguous block of at most $k$ positions in each active masked chunk.
The selected block lies in the middle of the chunk, so the largest remaining
masked chunk decreases by a constant factor until its length is at most $k$.
Thus the number of bisection levels is
\(
\mathcal O(\log(L/k)).
\)
Each level requires at most $k$ sequential substeps to unmask the selected block,
while different masked chunks are processed in parallel. Hence the total
parallel sampling depth is
\(
\mathcal O\!\left(k\log(L/k)\right).
\)
\end{proof}

\section{Score-guided bisection sampling}\label{app:score_guided_bisection_sampling}

In this section, we provide the detailed Algorithm (Algorithm~\ref{alg:k-score-guided-bisection}) for score guided bisection sampling for Order-$k$ random walks.

\begin{algorithm}[!htb]
\caption{Order-$k$ score-guided bisection sampling}
\label{alg:k-score-guided-bisection}
\begin{algorithmic}[1]
\Require Sequence length $L$, maximum growth order $k$, conditionals $\q_i(\cdot\mid \apx_U)$, scores $s_i(\apx_U)$, initial unmasked set $U$

\Procedure{ParallelSample}{$\mathcal{C}, U$}

    \Comment{\texttt{$\mathcal{C}$ is a set of disjoint candidate index sets}}
    \State $I \gets \emptyset$
    \ForAll{$C \in \mathcal{C}$ \textit{in parallel}}
        \State Choose $i \in \argmin_{j \in C} s_j(\apx_U)$
        \State Sample $\apx_{i} \sim \q_{i}(\cdot \mid \apx_U)$
        \State $I \gets I \cup \{i\}$
    \EndFor
    \State $U \gets U \cup I$ \Comment{\texttt{Synchronous state update}}
    \State \Return $I, U$
\EndProcedure

\Statex
\Procedure{BisectionSample}{$L, k, U$}
    \While{$U\neq \{1,\ldots,L\}$}
        \State $\mathcal{B} \gets \{ B \mid B \text{ is a maximal contiguous masked chunk in } \{1,\dots,L\} \setminus U \}$

        \State \Comment{\texttt{Phase 1: Unmask centers}}
        \State $\mathcal{C}_{\text{center}} \gets \left\{ C \subseteq B \;\middle|\; B \in \mathcal{B}, \, C \text{ is the centered sub-interval of size } \lceil |B|/2 \rceil \right\}$
        \State $I, U \gets \Call{ParallelSample}{\mathcal{C}_{\text{center}}, U}$
        \State Initialize active separators $A_B \gets B \cap I$ for all $B \in \mathcal{B}$

        \State \Comment{\texttt{Phase 2: Grow separators outwards}}
        \For{$h=2,\ldots,k$}
            \State $\mathcal{B}_{\text{active}} \gets \{ B \in \mathcal{B} \mid |A_B| < |B| \}$

            \State $\mathcal{C}_{\text{grow}} \gets \{ \{\min A_B - 1, \max A_B + 1\} \cap B \mid B \in \mathcal{B}_{\text{active}} \}$
            \State $I, U \gets \Call{ParallelSample}{\mathcal{C}_{\text{grow}}, U}$ \Comment{\texttt{Grow blocks in parallel}}

            \ForAll{$B \in \mathcal{B}_{\text{active}}$}
                \State $A_B \gets A_B \cup (B \cap I)$ \Comment{\texttt{Update active separators}}
            \EndFor
        \EndFor
    \EndWhile
    \State \Return $\apx_{1:L}$
\EndProcedure
\end{algorithmic}
\end{algorithm}

\section{Additional metrics for the language experiment}
\label{app:additional_language_experiments}

Figure~\ref{fig:language_experiments_appendix} reports generative perplexity under GPT2-Large, bi-gram and tri-gram
repetition rates for the same OpenWebText experiment. All bisection-based samplers,
especially entropy-guided bisection, maintain low repetition rates, while maintaining generative perplexity comparable to the AR model. Constant-entropy and
entropy-bounded samplers achieve low perplexity, but have poor MAUVE, and high repetition rates consistent with
prior observations \cite{wang2025remasking}.
These metrics support the same conclusion as the main-paper results: bisection-based
samplers provide the strongest non-autoregressive speed-quality tradeoff.

\begin{figure}[!hbt]
    \centering
    \includegraphics[width=\linewidth]{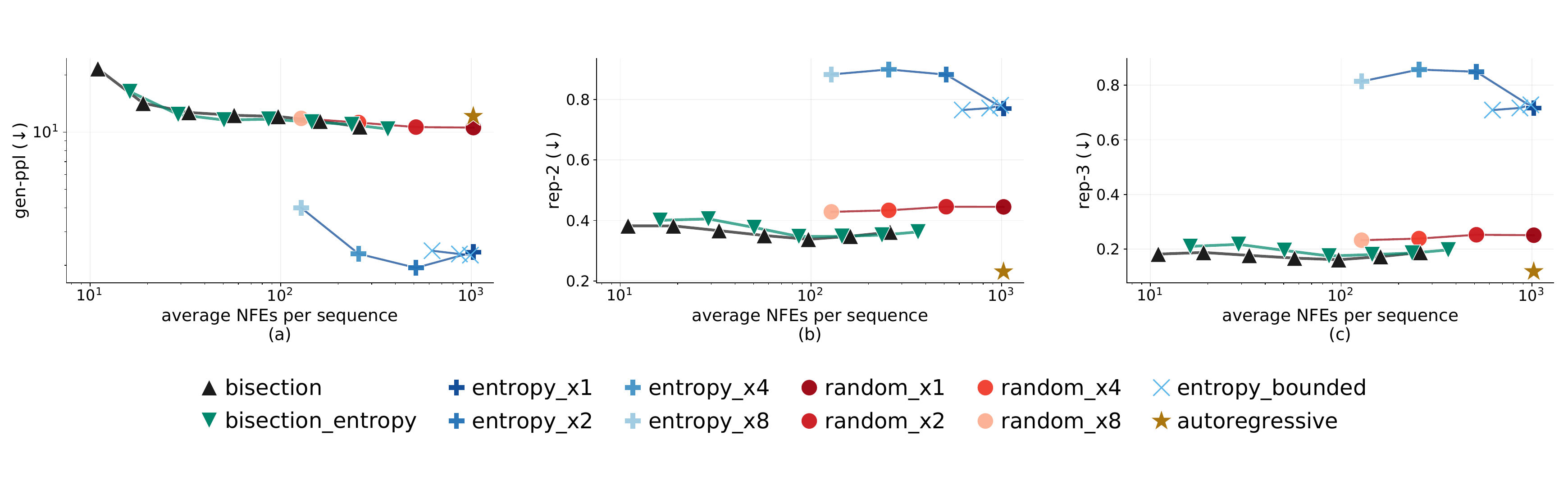}
    \caption{
    Additional language-generation metrics for the pretrained OpenWebText MDLM.
    We report (a) generative perplexity under GPT2-Large ($\downarrow$),
    and (b) bi-gram repetition rate ($\downarrow$), (c) tri-gram repetition rate ($\downarrow$)
    The x-axis is the average NFEs per sequence.
    }\label{fig:language_experiments_appendix}
\end{figure}

\section{Compute Resources}
\label{app:compute-resources}

All model training and sampler evaluation runs were performed on the TACC Vista
cluster.  GPU experiments used Vista Grace-Hopper nodes, each with one NVIDIA
H200 GPU with 96GB HBM3 memory, a 72-core NVIDIA Grace CPU, and approximately
120GB of host memory.  CPU-only preprocessing, aggregation, and plotting were
run on Vista CPU/login nodes; the node used for these auxiliary jobs exposed
144 ARM Neoverse-V2 cores and 237GiB of system memory.  These CPU-only jobs were
negligible compared with model training and sampler evaluation.

Each 100k-sample graph model trained in roughly 30 minutes on a single Vista GPU
node, i.e., well under one GPU-hour per model.  Sampler evaluation time varied
with the conditioning task.  All-sampler unconditional sweeps were the shortest,
taking roughly 10 minutes per model to generate all samples for each sampler.
Endpoint-conditional sweeps took roughly 25 minutes for each
ST--ER graph and about two hours for the bottleneck-family sweep, for roughly
three GPU-hours total across the final table settings.  Bottleneck-conditional
sweeps were the most expensive: the final bottleneck-family sweeps took between
about 30 minutes and 1.5 hours each, totaling roughly five GPU-hours for the
reported bottleneck-conditional table.

\section{Licenses}
\label{app:existing-assets}

The MDLM
codebase and the pretrained \texttt{kuleshov-group/mdlm-owt} checkpoint are
released under the Apache-2.0 license.  The Hugging Face OpenWebText dataset
card lists the dataset packaging under CC0-1.0, while noting that the curators
do not own the underlying web text; the Hugging Face GPT-2 Large model card
lists the model under the MIT license.

\providecommand{\pmv}[2]{#1 $\pm$ #2}
\definecolor{bestcoh}{HTML}{D9EAD3}
\definecolor{secondcoh}{HTML}{FFF2CC}
\providecommand{\best}[1]{\cellcolor{bestcoh}\textbf{#1}}
\providecommand{\second}[1]{\cellcolor{secondcoh}#1}
\providecommand{\cohtableformat}{%
  \scriptsize
  \setlength{\tabcolsep}{2.0pt}%
  \renewcommand{\arraystretch}{1.05}%
}

\section{Additional Coherence Results for Graph Random-Walk Samplers}
\label{app:coherence-tables}

\noindent\textbf{Configurations.}
The graph experiments use a masked discrete diffusion model over tokenized
random walks.  The backbone is the \texttt{tiny-tiny} DDiT configuration:
hidden size $256$, conditioning dimension $64$, $4$ transformer blocks,
$4$ attention heads, dropout $0.1$, untied input/output embeddings, and
sigma-scaled logits.  All graph models use the absorbing-state diffusion
parameterization with substitution loss, continuous time ($T=0$), a log-linear
noise schedule with $\sigma_{\min}=10^{-4}$ and $\sigma_{\max}=20$, antithetic
time sampling, and EMA decay $0.9999$.

All reported graph models are trained from scratch on synthetic random-walk
datasets generated by our code.  Each dataset has $N=500$ nodes, fixed walk
length $24$, $100{,}000$ training walks, and $1{,}000$ held-out test walks.
Optimization uses AdamW with learning rate $3\times 10^{-4}$, $\beta_1=0.9$,
$\beta_2=0.999$, $\epsilon=10^{-8}$, and weight decay $0$.  We train for
$50{,}000$ steps with global batch size $256$, gradient clipping at $1.0$,
bfloat16 precision, and a cosine decay schedule with linear warmup for the
first $10\%$ of training steps.  The warmup starts at learning rate $10^{-6}$
and the cosine schedule decays to minimum learning rate $10^{-6}$.  Validation
and checkpointing are run every $2{,}000$ training steps, and the checkpoint
selected for the table evaluations is the last checkpoint from each run.

\begin{table}[t]
\centering
\small
\caption{Graph data and training settings used for the coherence tables.  ST--ER
denotes a spanning-tree backbone plus independent random non-tree edges.}
\label{tab:graph-experiment-settings}
\begin{tabular}{lp{0.72\textwidth}}
\toprule
Setting & Value \\
\midrule
Model & DDiT model with hidden size $256$, conditioning dimension $64$, $4$ blocks, $4$ heads, sigma-scaled logits, untied embeddings \\
Diffusion & Absorbing-state MDLM, substitution parameterization, continuous time, log-linear noise, antithetic time sampling \\
Training size & $100{,}000$ walks per graph family \\
Test/cache size & $1{,}000$ held-out walks per graph family \\
Walk kernel & Lazy Markov random walk on each graph family \\
Walk length & Fixed length $24$ nodes \\
Optimizer & AdamW, lr $3\times10^{-4}$, betas $(0.9,0.999)$, eps $10^{-8}$, wd $0$ \\
Schedule & Cosine decay with $5{,}000$ warmup steps, warmup/min lr $10^{-6}$ \\
Training steps & $50{,}000$ steps, global batch size $256$, bf16 precision \\
Regularization & Dropout $0.1$, EMA $0.9999$, gradient clip $1.0$ \\
Validation/checkpointing & Every $2{,}000$ steps; checkpoint monitor is validation walk coherence; tables use the last checkpoint \\
Evaluation & $512$ unconditional samples, or $512$ prompts with $32$ samples per prompt for conditional settings \\
ST--ER $p=0$ & $p=0.0$, lazy probability $0.5$ \\
ST--ER avg. degree $7$ & $p=0.0100683294$, lazy probability $0.125$ \\
Bottleneck $m=2$ & $b\in\{1,8,64\}$, edge probability $0.02$, lazy probability $0.125$ \\
Bottleneck $m=10$ & chain communities, edge probability $0.106377551$, lazy probability $0.125$ \\
\bottomrule
\end{tabular}
\end{table}

For the two-community bottleneck comparison, lower-bridge graphs are nested
subgraphs of higher-bridge graphs: the $b=64$ graph is generated first, the
$b=8$ graph is obtained by deterministically trimming bridge edges from the
$b=64$ graph, and the $b=1$ graph is obtained by trimming the $b=8$ graph.
This keeps the within-community graph structure fixed while varying only the
number of inter-community bridges.  The ten-community bottleneck experiment
uses one chain-of-communities graph and reports aggregate bottleneck-conditional
metrics as well as span-specific bottleneck-conditional metrics.

All entries report coherence as mean $\pm$ standard deviation.  Unconditional
experiments use 512 samples per sampler.  Conditional and bottleneck-conditional
experiments use 512 prompts with 32 samples per prompt.  Standard deviations are
computed by splitting samples or prompts into four equal groups and taking the
standard deviation of the four group means.  The ST--ER average-degree-$7$
entries use the $100$k-training-sample model from the primary experiments, not
the sample-scaling $12.5$k model.

\noindent\textbf{Notation.}
ST--ER($p$) denotes the spanning-tree plus Erd\H{o}s--R\'enyi graph family used
in the experiments: a random spanning-tree backbone guarantees connectivity, and
each remaining non-tree edge is added independently with probability $p$.
Bottleneck graphs are composed of $m$ dense communities connected by rare
inter-community bridge edges; $b$ is the number of bridge edges per adjacent
community pair.  The ``lazy'' value is the probability of staying at the current
node during the random walk.  Endpoint-conditional experiments condition only
on the prescribed start and terminal nodes of the walk.  Bottleneck-conditional
experiments further condition on the bottleneck-crossing constraints in the
bottleneck graph family, and therefore test whether the sampler can satisfy
both endpoint constraints and the required inter-community transitions.  In
sampler names, a suffix such as \texttt{\_k4} means that four masked positions
are updated per model call, while \texttt{\_exponential} uses a growing update
budget.

\noindent\textbf{Trend summary.}
The results support the central claim that the efficiency--coherence
tradeoff is governed by the conditional-dependence structure induced by the
unmasking order.  Greedy constant-$k$ and exponential schedules degrade sharply
because their score rules often select local clusters of tokens from a single
denoiser call.  This effect is especially pronounced for confidence-based
decoding, where locally confident neighboring positions are likely to be
conditioned on one another but are nevertheless updated in parallel.  In
contrast, entropy-guided bisection preserves high coherence while reducing
sequential depth: its recursive middle-out schedule separates the next revealed
positions across subintervals, limiting local dependence among simultaneous
updates.  This is reflected in the tables, where bisection entropy is the strongest accelerated samplers and consistently performs well compared
with methods that obtain speedup by updating several positions from the same
denoiser call.  The bottleneck experiments further show that
graph structure controls the difficulty of conditional generation: coherence is
lowest when inter-community crossings are rare ($b=1$) or must traverse longer
community chains ($m=10$), and improves as wider bottlenecks provide more valid
bridge choices.

\begin{table}[t]
\centering
\cohtableformat
\caption{Unconditional coherence across samplers and graph families.}
\label{tab:unconditional-coherence}
\resizebox{\textwidth}{!}{%
\begin{tabular}{lcccccc}
\toprule
\makecell[l]{Sampling\\method}
& \makecell{ST--ER\\$p=0$\\lazy $0.5$}
& \makecell{ST--ER\\avg. deg. $7$\\lazy $0.125$}
& \makecell{Bottleneck\\$m=2,b=1$}
& \makecell{Bottleneck\\$m=2,b=8$}
& \makecell{Bottleneck\\$m=2,b=64$}
& \makecell{Bottleneck\\$m=10$} \\
\midrule
bisection & \pmv{0.996}{0.004} & \pmv{0.777}{0.051} & \pmv{0.812}{0.062} & \pmv{0.840}{0.055} & \pmv{0.799}{0.029} & \pmv{0.887}{0.004} \\
bisection entropy & \pmv{0.996}{0.004} & \pmv{0.844}{0.031} & \pmv{0.869}{0.012} & \pmv{0.891}{0.018} & \pmv{0.830}{0.038} & \pmv{0.916}{0.024} \\
greedy\_entropy & \pmv{0.998}{0.003} & \pmv{1.000}{0.000} & \pmv{1.000}{0.000} & \pmv{1.000}{0.000} & \pmv{1.000}{0.000} & \pmv{1.000}{0.000} \\
greedy\_entropy\_x2 & \pmv{0.002}{0.003} & \pmv{0.000}{0.000} & \pmv{0.000}{0.000} & \pmv{0.000}{0.000} & \pmv{0.000}{0.000} & \pmv{0.000}{0.000} \\
greedy\_entropy\_x4 & \pmv{0.000}{0.000} & \pmv{0.000}{0.000} & \pmv{0.000}{0.000} & \pmv{0.000}{0.000} & \pmv{0.000}{0.000} & \pmv{0.000}{0.000} \\
greedy\_entropy\_x6 & \pmv{0.000}{0.000} & \pmv{0.000}{0.000} & \pmv{0.000}{0.000} & \pmv{0.000}{0.000} & \pmv{0.000}{0.000} & \pmv{0.000}{0.000} \\
greedy\_entropy\_x8 & \pmv{0.000}{0.000} & \pmv{0.000}{0.000} & \pmv{0.000}{0.000} & \pmv{0.000}{0.000} & \pmv{0.000}{0.000} & \pmv{0.000}{0.000} \\
greedy\_entropy\_exponential & \pmv{0.002}{0.003} & \pmv{0.000}{0.000} & \pmv{0.000}{0.000} & \pmv{0.000}{0.000} & \pmv{0.000}{0.000} & \pmv{0.000}{0.000} \\
random & \pmv{0.988}{0.012} & \pmv{0.783}{0.026} & \pmv{0.818}{0.028} & \pmv{0.830}{0.019} & \pmv{0.758}{0.017} & \pmv{0.879}{0.031} \\
random\_x2 & \pmv{0.146}{0.024} & \pmv{0.248}{0.013} & \pmv{0.199}{0.025} & \pmv{0.250}{0.034} & \pmv{0.258}{0.032} & \pmv{0.127}{0.037} \\
random\_x4 & \pmv{0.000}{0.000} & \pmv{0.023}{0.012} & \pmv{0.004}{0.004} & \pmv{0.008}{0.006} & \pmv{0.006}{0.003} & \pmv{0.004}{0.004} \\
random\_x6 & \pmv{0.000}{0.000} & \pmv{0.000}{0.000} & \pmv{0.000}{0.000} & \pmv{0.000}{0.000} & \pmv{0.004}{0.004} & \pmv{0.000}{0.000} \\
random\_x8 & \pmv{0.000}{0.000} & \pmv{0.000}{0.000} & \pmv{0.000}{0.000} & \pmv{0.000}{0.000} & \pmv{0.000}{0.000} & \pmv{0.000}{0.000} \\
random\_exponential & \pmv{0.320}{0.054} & \pmv{0.006}{0.006} & \pmv{0.004}{0.004} & \pmv{0.004}{0.007} & \pmv{0.002}{0.003} & \pmv{0.012}{0.009} \\
greedy\_confidence & \pmv{0.996}{0.004} & \pmv{0.998}{0.003} & \pmv{0.996}{0.004} & \pmv{0.992}{0.006} & \pmv{0.990}{0.006} & \pmv{0.926}{0.012} \\
greedy\_confidence\_x2 & \pmv{0.002}{0.003} & \pmv{0.000}{0.000} & \pmv{0.000}{0.000} & \pmv{0.000}{0.000} & \pmv{0.002}{0.003} & \pmv{0.000}{0.000} \\
greedy\_confidence\_x4 & \pmv{0.000}{0.000} & \pmv{0.000}{0.000} & \pmv{0.000}{0.000} & \pmv{0.000}{0.000} & \pmv{0.000}{0.000} & \pmv{0.000}{0.000} \\
greedy\_confidence\_x6 & \pmv{0.000}{0.000} & \pmv{0.000}{0.000} & \pmv{0.000}{0.000} & \pmv{0.000}{0.000} & \pmv{0.000}{0.000} & \pmv{0.000}{0.000} \\
greedy\_confidence\_x8 & \pmv{0.000}{0.000} & \pmv{0.000}{0.000} & \pmv{0.000}{0.000} & \pmv{0.000}{0.000} & \pmv{0.000}{0.000} & \pmv{0.000}{0.000} \\
greedy\_margin & \pmv{0.994}{0.003} & \pmv{0.973}{0.004} & \pmv{0.977}{0.020} & \pmv{0.971}{0.016} & \pmv{0.936}{0.019} & \pmv{0.877}{0.022} \\
greedy\_margin\_x2 & \pmv{0.006}{0.006} & \pmv{0.012}{0.007} & \pmv{0.004}{0.004} & \pmv{0.006}{0.010} & \pmv{0.010}{0.009} & \pmv{0.010}{0.010} \\
greedy\_margin\_x4 & \pmv{0.000}{0.000} & \pmv{0.000}{0.000} & \pmv{0.000}{0.000} & \pmv{0.000}{0.000} & \pmv{0.000}{0.000} & \pmv{0.000}{0.000} \\
greedy\_margin\_x6 & \pmv{0.000}{0.000} & \pmv{0.000}{0.000} & \pmv{0.000}{0.000} & \pmv{0.000}{0.000} & \pmv{0.000}{0.000} & \pmv{0.000}{0.000} \\
greedy\_margin\_x8 & \pmv{0.000}{0.000} & \pmv{0.000}{0.000} & \pmv{0.000}{0.000} & \pmv{0.000}{0.000} & \pmv{0.000}{0.000} & \pmv{0.000}{0.000} \\
\bottomrule
\end{tabular}
}
\end{table}

\begin{table}[t]
\centering
\cohtableformat
\caption{Endpoint-conditional coherence across samplers and graph families.}
\label{tab:conditional-coherence}
\resizebox{\textwidth}{!}{%
\begin{tabular}{lcccccc}
\toprule
\makecell[l]{Sampling\\method}
& \makecell{ST--ER\\$p=0$\\lazy $0.5$}
& \makecell{ST--ER\\avg. deg. $7$\\lazy $0.125$}
& \makecell{Bottleneck\\$m=2,b=1$}
& \makecell{Bottleneck\\$m=2,b=8$}
& \makecell{Bottleneck\\$m=2,b=64$}
& \makecell{Bottleneck\\$m=10$} \\
\midrule
bisection & \pmv{0.947}{0.012} & \pmv{0.620}{0.008} & \pmv{0.755}{0.004} & \pmv{0.773}{0.007} & \pmv{0.680}{0.012} & \pmv{0.836}{0.008} \\
bisection entropy & \pmv{0.956}{0.013} & \pmv{0.745}{0.006} & \pmv{0.818}{0.009} & \pmv{0.822}{0.007} & \pmv{0.748}{0.003} & \pmv{0.973}{0.004} \\
greedy\_entropy & \pmv{0.952}{0.011} & \pmv{0.927}{0.003} & \pmv{0.941}{0.041} & \pmv{0.968}{0.006} & \pmv{0.943}{0.001} & \pmv{0.973}{0.003} \\
greedy\_entropy\_x2 & \pmv{0.611}{0.022} & \pmv{0.593}{0.007} & \pmv{0.555}{0.022} & \pmv{0.560}{0.009} & \pmv{0.537}{0.006} & \pmv{0.479}{0.008} \\
greedy\_entropy\_x4 & \pmv{0.126}{0.013} & \pmv{0.000}{0.000} & \pmv{0.000}{0.000} & \pmv{0.000}{0.000} & \pmv{0.000}{0.000} & \pmv{0.000}{0.000} \\
greedy\_entropy\_x6 & \pmv{0.041}{0.003} & \pmv{0.000}{0.000} & \pmv{0.000}{0.000} & \pmv{0.000}{0.000} & \pmv{0.000}{0.000} & \pmv{0.000}{0.000} \\
greedy\_entropy\_x8 & \pmv{0.015}{0.003} & \pmv{0.000}{0.000} & \pmv{0.000}{0.000} & \pmv{0.000}{0.000} & \pmv{0.000}{0.000} & \pmv{0.000}{0.000} \\
greedy\_entropy\_exponential & \pmv{0.029}{0.004} & \pmv{0.000}{0.000} & \pmv{0.000}{0.000} & \pmv{0.000}{0.000} & \pmv{0.000}{0.000} & \pmv{0.000}{0.000} \\
random & \pmv{0.938}{0.015} & \pmv{0.712}{0.005} & \pmv{0.788}{0.013} & \pmv{0.800}{0.006} & \pmv{0.701}{0.005} & \pmv{0.850}{0.007} \\
random\_x2 & \pmv{0.736}{0.009} & \pmv{0.284}{0.007} & \pmv{0.327}{0.008} & \pmv{0.326}{0.004} & \pmv{0.289}{0.004} & \pmv{0.424}{0.006} \\
random\_x4 & \pmv{0.424}{0.013} & \pmv{0.039}{0.001} & \pmv{0.048}{0.006} & \pmv{0.049}{0.002} & \pmv{0.042}{0.003} & \pmv{0.105}{0.006} \\
random\_x6 & \pmv{0.219}{0.011} & \pmv{0.003}{0.001} & \pmv{0.005}{0.001} & \pmv{0.004}{0.001} & \pmv{0.004}{0.001} & \pmv{0.020}{0.001} \\
random\_x8 & \pmv{0.105}{0.003} & \pmv{0.000}{0.000} & \pmv{0.000}{0.000} & \pmv{0.000}{0.000} & \pmv{0.000}{0.000} & \pmv{0.004}{0.001} \\
random\_exponential & \pmv{0.399}{0.012} & \pmv{0.016}{0.002} & \pmv{0.012}{0.002} & \pmv{0.014}{0.002} & \pmv{0.011}{0.002} & \pmv{0.028}{0.002} \\
greedy\_confidence & \pmv{0.937}{0.008} & \pmv{0.917}{0.001} & \pmv{0.930}{0.035} & \pmv{0.951}{0.003} & \pmv{0.919}{0.004} & \pmv{0.914}{0.003} \\
greedy\_confidence\_x2 & \pmv{0.662}{0.017} & \pmv{0.111}{0.027} & \pmv{0.101}{0.023} & \pmv{0.099}{0.028} & \pmv{0.065}{0.002} & \pmv{0.161}{0.042} \\
greedy\_confidence\_x4 & \pmv{0.168}{0.011} & \pmv{0.000}{0.000} & \pmv{0.000}{0.000} & \pmv{0.000}{0.000} & \pmv{0.000}{0.000} & \pmv{0.000}{0.000} \\
greedy\_confidence\_x6 & \pmv{0.063}{0.008} & \pmv{0.000}{0.000} & \pmv{0.000}{0.000} & \pmv{0.000}{0.000} & \pmv{0.000}{0.000} & \pmv{0.000}{0.000} \\
greedy\_confidence\_x8 & \pmv{0.024}{0.006} & \pmv{0.000}{0.000} & \pmv{0.000}{0.000} & \pmv{0.000}{0.000} & \pmv{0.000}{0.000} & \pmv{0.000}{0.000} \\
greedy\_margin & \pmv{0.928}{0.010} & \pmv{0.882}{0.007} & \pmv{0.900}{0.036} & \pmv{0.922}{0.001} & \pmv{0.858}{0.002} & \pmv{0.868}{0.003} \\
greedy\_margin\_x2 & \pmv{0.523}{0.019} & \pmv{0.500}{0.020} & \pmv{0.516}{0.025} & \pmv{0.460}{0.054} & \pmv{0.466}{0.020} & \pmv{0.483}{0.023} \\
greedy\_margin\_x4 & \pmv{0.198}{0.011} & \pmv{0.016}{0.004} & \pmv{0.011}{0.002} & \pmv{0.003}{0.002} & \pmv{0.005}{0.002} & \pmv{0.055}{0.027} \\
greedy\_margin\_x6 & \pmv{0.089}{0.011} & \pmv{0.000}{0.000} & \pmv{0.000}{0.000} & \pmv{0.000}{0.000} & \pmv{0.000}{0.000} & \pmv{0.007}{0.003} \\
greedy\_margin\_x8 & \pmv{0.040}{0.009} & \pmv{0.000}{0.000} & \pmv{0.000}{0.000} & \pmv{0.000}{0.000} & \pmv{0.000}{0.000} & \pmv{0.000}{0.000} \\
\bottomrule
\end{tabular}
}
\end{table}

\begin{table}[t]
\centering
\cohtableformat
\caption{Bottleneck-conditional coherence across samplers and bottleneck graph families.}
\label{tab:bottleneck-conditional-coherence}
\begin{tabular}{lcccc}
\toprule
\makecell[l]{Sampling\\method}
& \makecell{Bottleneck\\$m=2,b=1$}
& \makecell{Bottleneck\\$m=2,b=8$}
& \makecell{Bottleneck\\$m=2,b=64$}
& \makecell{Bottleneck\\$m=10$} \\
\midrule
bisection & \pmv{0.253}{0.028} & \pmv{0.572}{0.010} & \pmv{0.587}{0.008} & \pmv{0.182}{0.052} \\
bisection entropy & \pmv{0.285}{0.037} & \pmv{0.703}{0.010} & \pmv{0.696}{0.004} & \pmv{0.284}{0.091} \\
greedy\_entropy & \pmv{0.172}{0.025} & \pmv{0.897}{0.002} & \pmv{0.928}{0.004} & \pmv{0.288}{0.090} \\
greedy\_entropy\_x2 & \pmv{0.084}{0.010} & \pmv{0.478}{0.007} & \pmv{0.541}{0.007} & \pmv{0.113}{0.028} \\
greedy\_entropy\_x4 & \pmv{0.000}{0.000} & \pmv{0.000}{0.000} & \pmv{0.000}{0.000} & \pmv{0.000}{0.000} \\
greedy\_entropy\_x6 & \pmv{0.000}{0.000} & \pmv{0.000}{0.000} & \pmv{0.000}{0.000} & \pmv{0.000}{0.000} \\
greedy\_entropy\_x8 & \pmv{0.000}{0.000} & \pmv{0.000}{0.000} & \pmv{0.000}{0.000} & \pmv{0.000}{0.000} \\
greedy\_entropy\_exponential & \pmv{0.000}{0.000} & \pmv{0.000}{0.000} & \pmv{0.000}{0.000} & \pmv{0.000}{0.000} \\
random & \pmv{0.224}{0.025} & \pmv{0.668}{0.007} & \pmv{0.656}{0.005} & \pmv{0.199}{0.060} \\
random\_x2 & \pmv{0.083}{0.010} & \pmv{0.248}{0.005} & \pmv{0.254}{0.005} & \pmv{0.091}{0.028} \\
random\_x4 & \pmv{0.009}{0.001} & \pmv{0.029}{0.001} & \pmv{0.031}{0.002} & \pmv{0.019}{0.004} \\
random\_x6 & \pmv{0.001}{0.001} & \pmv{0.002}{0.000} & \pmv{0.002}{0.001} & \pmv{0.002}{0.001} \\
random\_x8 & \pmv{0.000}{0.000} & \pmv{0.000}{0.000} & \pmv{0.000}{0.000} & \pmv{0.000}{0.000} \\
random\_exponential & \pmv{0.004}{0.001} & \pmv{0.012}{0.000} & \pmv{0.010}{0.001} & \pmv{0.009}{0.003} \\
greedy\_confidence & \pmv{0.160}{0.024} & \pmv{0.903}{0.004} & \pmv{0.911}{0.002} & \pmv{0.287}{0.090} \\
greedy\_confidence\_x2 & \pmv{0.015}{0.003} & \pmv{0.090}{0.008} & \pmv{0.082}{0.004} & \pmv{0.028}{0.010} \\
greedy\_confidence\_x4 & \pmv{0.000}{0.000} & \pmv{0.000}{0.000} & \pmv{0.000}{0.000} & \pmv{0.001}{0.001} \\
greedy\_confidence\_x6 & \pmv{0.000}{0.000} & \pmv{0.000}{0.000} & \pmv{0.000}{0.000} & \pmv{0.000}{0.000} \\
greedy\_confidence\_x8 & \pmv{0.000}{0.000} & \pmv{0.000}{0.000} & \pmv{0.000}{0.000} & \pmv{0.000}{0.000} \\
greedy\_margin & \pmv{0.170}{0.018} & \pmv{0.845}{0.005} & \pmv{0.846}{0.010} & \pmv{0.270}{0.090} \\
greedy\_margin\_x2 & \pmv{0.077}{0.007} & \pmv{0.272}{0.010} & \pmv{0.407}{0.004} & \pmv{0.093}{0.025} \\
greedy\_margin\_x4 & \pmv{0.002}{0.001} & \pmv{0.003}{0.000} & \pmv{0.004}{0.001} & \pmv{0.008}{0.003} \\
greedy\_margin\_x6 & \pmv{0.000}{0.000} & \pmv{0.000}{0.000} & \pmv{0.000}{0.000} & \pmv{0.000}{0.000} \\
greedy\_margin\_x8 & \pmv{0.000}{0.000} & \pmv{0.000}{0.000} & \pmv{0.000}{0.000} & \pmv{0.000}{0.000} \\
\bottomrule
\end{tabular}
\end{table}

\end{document}